\documentclass{article}
\usepackage{enumitem}
\usepackage{arxiv}

\usepackage[utf8]{inputenc} 
\usepackage[T1]{fontenc}    
\usepackage{booktabs}       
\usepackage{nicefrac}       
\usepackage{microtype}      
\usepackage{lipsum}
\usepackage{multirow}
\usepackage[table]{xcolor}
\usepackage{arydshln}
\usepackage{siunitx}
\usepackage{makecell} 
\usepackage{mathrsfs}
\usepackage{svg}
\usepackage{algpseudocode}
\usepackage{algorithm}
\usepackage{array}
\usepackage[caption=false,font=normalsize,labelfont=sf,textfont=sf]{subfig}
\usepackage{textcomp}
\usepackage{stfloats}
\usepackage{url}
\usepackage{verbatim}
\usepackage{rotating}
\usepackage{graphicx}
\usepackage{multicol}
\usepackage{cellspace}
\usepackage{subcaption}
\usepackage{amsmath, amsfonts, amsthm, amssymb}
\usepackage{fancyhdr}
\usepackage{hyperref}
\usepackage{wrapfig}

\title{Paper08-IEEE}
\author{itxwaleedrazzaq }
\date{}

\newtheorem{theorem}{Theorem}
\newtheorem{lemma}{Lemma}
\newtheorem{corollary}{Corollary}

\theoremstyle{definition}

\newcounter{problem}

\theoremstyle{remark}

\title{FLUID: Continuous-Time Hyperconnected Sparse Transformer for Sink-Free Learning}

\author{
  Waleed Razzaq \\
  School of Automation\\
  University of Science and Technology China\\
  Hefei, Anhui \\
  \texttt{waleedrazzaq@mail.ustc.edu.cn} \\
  \And
  Yun-Bo Zhao\thanks{Corresponding author. Email: \texttt{ybzhao@ustc.edu.cn}} \\
  School of Automation\\
  University of Science and Technology China\\
  Hefei, Anhui \\
  \texttt{ybzhao@ustc.edu.cn} \\
}

\begin{document}
\maketitle

\begin{abstract}
Continuous-time (CT) Transformers improve irregular and long-range modeling over CT-RNNs by exploiting inputs or outputs embeddings with continuous dynamics. However, the core scaled-dot-product-attention (SDPA) mechanism remains inherently discrete. We propose FLUID (Flexible Unified Information Dynamics), a CT Transformer that incorporates continuous dynamics directly into the attention computation by replacing it with Liquid Attention Network (LAN). LAN reinterprets attention logits as continuous dynamical system and reformulates them as the solution to a linear ODE modulated by input-dependent nonlinear recurrent gates. Theoretically, we establish stability guarantees for LAN dynamics and show that it serves as an interpolating middle ground between SDPA and CT-RNNs, recovering each as special case under well-defined parameterization of its gating functions. LAN also introduces an explicit attention-sink gate to eliminate disproportionate attention mass on uninformative nodes. FLUID replaces standard residual connections with input-dependent Liquid Hyper-Connections to adaptively regulate interlayer information flow. Empirically, we evaluate FLUID on a broad set of learning tasks, including (i) irregular time-series, (ii) long-range modeling, (iii) lane-keeping control of autonomous vehicles, and (iv) learning physical dynamics under a scarce data regime. Across all the tasks, FLUID consistently matches or outperforms CT baselines, achieving improvements of up to 47\% in certain scenarios and enhancing generalization under distributional shifts. Additionally, FLUID demonstrates superior noise robustness and a self-correcting inductive bias in autonomous vehicle control. We also provide a detailed analysis of key hyperparameters to guide tuning and show that FLUID occupies an intermediate position among competing approaches in terms of runtime and memory efficiency.
\end{abstract}

\keywords{Continuous-time learning; Neural ODE, Liquid Neural Network, Transformer, Attention Mechanism}

\section{Introduction}

Sequential modeling is a fundamental paradigm in AI research for real-world applications such as robotics~\cite{yu2025recent, razzaq2023neural}, medicine~\cite{rostami2025hierarchical}, and industry~\cite{oikonomou2025time, razzaq2025carle}. Traditionally, discrete-time (DT) models such as RNN~\cite{rumelhart1985learning}, LSTM~\cite{hochreiter1997long}, and GRU~\cite{cho2014learning} have been the default choice for sequence modeling. However, real-world sequence data are often irregularly sampled or span long time horizons, which limits the expressive power of these models, which may overlook important fine-grained information~\cite{chen2019neuralordinarydifferentialequations,rubanova2019latent}. Continuous-time (CT) learning offers a promising alternative by using the mathematics of differential equations to model such fine-grained dynamics. It also provides inherent advantages in terms of stability and interpretability. Neural ODEs~\cite{chen2019neuralordinarydifferentialequations} made CT learning practical by parameterizing dynamics with neural networks solved via adaptive ODE solvers~\cite{dormand1980runge,hindmarsh2005sundials}, yet they incur high computational cost, struggle with discontinuous dynamics, and lack the expressiveness needed for complex sequential tasks. CT-RNNs~\cite{rubanova2019latent} introduced a time-constant factor to stabilize Neural ODE training but remain sensitive to hyperparameter choices and can degrade over long horizons. Mixed-memory RNNs~\cite{lechner2022mixed} address these issues through dual-memory pathways that separately track fast fluctuations and slow trends, reducing initialization sensitivity. Liquid Neural Network (LNN)~\cite{hasani2021liquid,hasani2022closed} take biological inspiration to model CT learning behavior as input-dependent time-constant parameters to adapt in real-time. However, this coupling substantially increases training cost, often outweighing the gains in expressivity.

Transformers~\cite{vaswani2017attention} have substantially advanced sequence modeling by replacing recurrence with the scaled dot-product attention (SDPA) mechanism, which enables parallel computation and effectively captures long-range dependencies by emphasizing the most relevant elements in a sequence. However, SDPA Transformers are inherently discrete, as the SDPA is computed over indexed token positions with attention weights defined on discrete sequence elements. Consequently, they lack the native inductive bias for representing CT trajectories or capturing the smoothly evolving dynamics essential for real-world systems. This limitation motivates the integration of CT dynamics into Transformers to better model smoothly varying temporal processes while retaining strength in long-range dependency modeling.

Recent work has attempted to augment Transformers with CT dynamics. mTAN~\cite{shukla2021multi} introduces time-aware attention with continuous embeddings that interpolate observations into fixed-length representations, improving robustness. However, the reliance on interpolation may obscure the fine temporal structure. Continuous-time attention (CTA)~\cite{chien2021continuous} generalizes attention by parameterizing hidden states and attention mechanisms using Neural ODEs, enabling modeling of irregular sequences at high computational cost. ODEFormer~\cite{d2023odeformer} learns underlying differential equations from irregular trajectories, improving interpretability but relying heavily on strong data assumptions and often synthetic training settings. ContiFormer~\cite{chen2023contiformer} combines latent ODE dynamics with time-aware attention to jointly model temporal evolution dependencies, still incurring high computational cost.

Despite their individual limitations, CT Transformers share several common issues: (i) existing methods typically apply CT dynamics to either input or output embeddings, while the underlying attention is still computed over discrete samples rather than a truly continuous temporal domain; (ii) they are susceptible to attention sinks~\cite{xiao2023efficient}, where disproportionate attention mass is assigned to initial or uninformative tokens; (iii) they use residual connections that can induce the seesaw effect~\cite{zhu2024hyper}, where the model struggles to balance new information with the historical context; (iv) they often require large amounts of refined data to learn meaningful patterns; and (v) they exhibit limited robustness under distribution shifts, particularly to unseen sampling frequencies.

We propose \textit{FLUID (Flexible Unified Information Dynamics)}, a CT Transformer architecture that addresses these limitations by incorporating Liquid Attention Network (LAN) as its core attention mechanism. Rather than applying CT dynamics to inputs or outputs, FLUID embeds them directly into the attention logits computation, where logits evolve according to a parameterized linear ODE modulated by input-dependent nonlinear, interlinked gates that couple query–key interactions over time. LAN incorporates an attention-sink gate to mitigate attention-sink behavior. To address the seesaw effect, FLUID replaces standard residual connections with input-dependent Liquid Hyper-Connections~\cite{zhu2024hyper} ($\mathcal{HC_{\text{liquid}}}$) to regulate the historical context through input-dependent gates. We evaluate FLUID across a diverse set of learning tasks: (i) irregular time-series modeling; (ii) long-range sequence modeling; (iii) lane-keeping control for autonomous vehicles; and (iv) learning physical dynamics under scarce data.

Our contributions are as follows:
\begin{itemize}
    \item We propose the FLUID Transformer architecture, which shifts CT dynamics from the inputs/outputs to the attention mechanism itself by introducing Liquid Attention Network (LAN). LAN reformulates attention logits through linear ODE modulated by nonlinear input-dependent gates and incorporates an attention-sink gate to mitigate attention-sink phenomena.
    \item Liquid Hyper-Connections (\(\mathcal{HC}_{\text{liquid}}\)) to replace standard residual connections, which apply input-dependent gating to regulate the balance between new and historical context addressing the seesaw effect.
    \item We conduct extensive evaluations across diverse learning tasks: (i) irregular time-series modeling; (ii) long-range sequence modeling; (iii) lane-keeping control of autonomous vehicles; and (iv) learning physical dynamics under scarce data.
\end{itemize}

The remainder of the paper is organized as follows. Section~\ref{sec:related_work} reviews the relevant literature and highlights how our approach differs from prior work. Section~\ref{sec:fluid} presents the comprehensive architectural details of the proposed FLUID transformer. Section~\ref{sec:eval} comprehensively evaluates FLUID across four diverse sets of learning tasks. Section~\ref{sec:discuss} discusses the possible future extensions of the work. Finally, Section~\ref{sec:conclude} concludes the paper.

\section{Related Work}\label{sec:related_work}
\textbf{Continuous-time RNNs:} CT-RNN variants such as CT-RNN~\cite{rubanova2019latent}, GRU-ODE~\cite{de2019gru} and PhasedLSTM~\cite{neil2016phased} model temporal dynamics through differential equations over hidden states. mmRNNs~\cite{lechner2022mixed} and LNN~\cite{hasani2021liquid, hasani2022closed}, introduce multiple time scales and input-dependent time- constants through nonlinear gating mechanisms. FLUID draws inspiration from these input-dependent mechanisms but applies them to attention computation rather than hidden-state evolution.

\textbf{Continuous-time Transformers:} CT-Transformers extends the SDPA Transformers~\cite{vaswani2023attentionneed} by incorporating either temporal embeddings or differential equations into the model dynamics. mTAN~\cite{shukla2021multi} introduces a time-aware attention mechanism, whereas ContiFormer~\cite{chen2023contiformer} and ODEFormer~\cite{d2023odeformer} parameterize hidden representations using Neural ODEs or latent continuous dynamics. However, in these methods, the core attention computation remains discrete, with continuous modeling applied only to the input or output embeddings of the Transformer. FLUID shifts CT modeling directly into the attention computation itself using LAN, reformulating them as an input-dependent linear ODE modulated with nonlinear, interlinked gates.

\begin{figure*}[t]
\centering
\includegraphics[width=0.95\textwidth]{fig1.png}
\caption{Illustration of the internal architecture of FLUID Transformer. Input embeddings are shared between the encoder and decoder. The encoder consists of self-LAN with $G_{\text{sink}}$ and \(\mathcal{HC}_{\text{liquid}}\), while the decoder consists of cross-LAN and self-LAN.}
\label{fig:fluid_architecture}
\end{figure*}

\section{FLUID Transformer}\label{sec:fluid}
In this section, we outline the design of each component of the FLUID Transformer. Similar to the SDPA Transformer, it comprises three components: (i) input embeddings, (ii) encoder, and (iii) decoder. However, the standard SDPA and residual connections are replaced with LAN and \(\mathcal{HC}_{\text{liquid}}\), respectively. For a detailed overview of the SDPA Transformer, refer to Appendix~\ref{appendix:sdpa}. The internal architecture of the FLUID Transformer is illustrated in Figure~\ref{fig:fluid_architecture}.

\textbf{Input Embeddings:} Unlike the SDPA Transformer, where the encoder and decoder use separate embeddings, we share embedding parameters between them. This design enforces consistent temporal representations across both historical and forecasting horizons, improves information transfer via cross-LAN, reduces parameter redundancy, and acts as a regularizer that enhances robustness to noise and data irregularities, leading to more stable performance in low-data and distributional shifts.

\textbf{Encoder:} The encoder block consists of $\mathcal{N}$ stacked layers, each containing a self-LAN module followed by a position-wise feed-forward network (FFN) with $\mathcal{HC_{\text{liquid}}}$ and layer normalization applied after each sublayer. Self-LAN enables each position to dynamically adapt its attention based on the input, allowing flexible temporal dependency modeling. In parallel, $\mathcal{HC_{\text{liquid}}}$ modulates interlayer information flow conditioned on the input, enabling adaptive propagation of representations across depth. For the input ($\mathbf{x}$), the encoder computes:
\begin{align}
\mathbf{h} &= \mathrm{LayerNorm}(\mathcal{HC_{\text{liquid}}}(\mathbf{x} , \mathrm{LAN}(\mathbf{x}, \mathbf{x}, \mathbf{x}))) \\
\mathrm{ENC}(\mathbf{x}) &= \mathrm{LayerNorm}(\mathcal{HC_{\text{liquid}}}(\mathbf{h} , \mathrm{FFN}(\mathbf{h})))
\end{align}

\textbf{Decoder:} The decoder also consists of $\mathcal{N}$ stacked layers, each consisting of three sub-modules: (i) masked self-LAN; (ii) cross-LAN; and (iii) FFN. Masked Self-LAN restricts each position to attend only to previous positions, ensuring autoregressive generation, while cross-LAN conditions on the encoder representations. For input ($\mathbf{x}$) and encoder output ($\mathbf{z}$), decoder computes:
\begin{align}
\mathbf{h_1} &= \mathrm{LayerNorm}(\mathcal{HC_{\text{liquid}}}(\mathbf{x} , \mathrm{LAN}(\mathbf{x}, \mathbf{x}, \mathbf{x}))) \\
\mathbf{h_2} &= \mathrm{LayerNorm}(\mathcal{HC_{\text{liquid}}}(\mathbf{h_1} , \mathrm{LAN}(\mathbf{h_1}, \mathbf{z}, \mathbf{z}))) \\
\mathrm{Dec}(\mathbf{x}, \mathbf{z}) &= \mathrm{LayerNorm}(\mathcal{HC_{\text{liquid}}}(\mathbf{h_2} , \mathrm{FFN}(\mathbf{h_2})))
\end{align}

\subsection{Liquid Attention Network (LAN)}
Loosely motivated by the input-dependent time-constant dynamics of LNNs (refer to Appendix~\ref{appendix:prelim_lnn}), we propose to view the computation of attention logits (refer to Appendix~\ref{appendix:prelim_attention}) as a CT dynamical process via a linear ODE modulated by input-dependent nonlinear interlinked recurrent gates, defined as follows:
\begin{equation}
\dot{a}_t
= - \underbrace{g_\tau(\mathbf{u}_t,t, \theta_\tau)}_{\text{$f_\tau(\mathbf{u}_t)$}} \, a_t
   + \underbrace{g_\phi(\mathbf{u}_t,t,\theta_\phi)}_{\text{$f_\phi(\mathbf{u}_t)$}}
\label{eq:liquid_ode}
\end{equation}
where \( f_\tau\) is learnable time-constant gate with input \(\mathbf{u_t=[q;k]}\) defined by query-key interactions, \(\theta_\tau\) learnable parameters and \( g_\tau(\cdot) \) denotes the activation function. Similarly, \( f_\phi \) is content-target gate parameterized using \(\theta_\phi\) and \( g_\phi(\cdot) \) activation. time ($t$) indexes latent logit-refinement axis along which each attention logit converges towards its equilibrium value governed by \( f_\tau\) and \( f_\phi \). We refer this formulation as \textit{Liquid Attention Network (LAN)}.

\textbf{LAN forward-pass via ODE solver:} LAN follows the Neural-ODE~\cite{chen2019neuralordinarydifferentialequations} framework, where forward-pass updates are computed using a numerical ODE solver. We use the \textit{explicit Euler} method because of its simplicity and efficiency. Higher-order solvers such as Runge‒Kutta may offer more accuracy, but their computational cost scales poorly with sequence length, making them impractical for long sequences. Let \(\Delta t\) be the step size, with discrete times \(t_n = n \Delta t\) and logit states \(a_n = a(t_n)\). The update is
\begin{equation}
a_{n+1} = a_n + \Delta t (-f_{\tau,n} a_n + f_{\phi,n})
\label{eq:liquid_euler}
\end{equation}

\subsubsection{Theoretical Analysis}
We now analyze the theoretical aspect of LAN dynamics under both continuous dynamics and Euler approximations.

\begin{theorem}[Forward Invariance and Boundedness]\label{theorem:state_stability}
Let the gating functions be bounded such that $0 < \tau_{\min} \le f_\tau$ and $f_\phi \in [\phi_{\min}, \phi_{\max}]$ for all $t$. Define the instantaneous equilibrium bounds:
\begin{equation}
A_{\min} = \inf_t \left( \frac{f_\phi}{f_\tau} \right), \quad A_{\max} = \sup_t \left( \frac{f_\phi}{f_\tau} \right)
\end{equation}
The interval $\mathcal{I} = [A_{\min}, A_{\max}]$ is a forward-invariant set for the dynamics in Eq.~\ref{eq:liquid_ode}. If the initial condition $a_0 \in \mathcal{I}$, then $a_t \in \mathcal{I}$ for all $t \ge 0$. 
\end{theorem}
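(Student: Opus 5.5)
The plan is a comparison (barrier) argument on the scalar linear ODE of Eq.~\ref{eq:liquid_ode}, treating $f_\tau(t)=g_\tau(\mathbf{u}_t,t,\theta_\tau)$ and $f_\phi(t)=g_\phi(\mathbf{u}_t,t,\theta_\phi)$ as given time-varying coefficients. I assume them at least piecewise continuous (or measurable and bounded), so that $a_t$ is an absolutely continuous Carath\'eodory solution. Write the right-hand side as
\begin{equation*}
\dot a_t = -f_\tau(t)\bigl(a_t - r(t)\bigr), \qquad r(t) := \frac{f_\phi(t)}{f_\tau(t)} .
\end{equation*}
Since $f_\tau\ge\tau_{\min}>0$ and $f_\phi$ is bounded, $r(t)$ is well defined and $A_{\min}\le r(t)\le A_{\max}$ for all $t$. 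This is the definition of $A_{\min}$ and $A_{\max}$. The key observation is that the vector field points into $\mathcal{I}$ on its boundary. If $a_t = A_{\max}$ then $\dot a_t = -f_\tau(t)(A_{\max}-r(t))\le 0$, and if $a_t=A_{\min}$ then $\dot a_t\ge 0$.

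Because the boundary derivative may vanish (when $r(t)$ touches $A_{\max}$), the tangency condition alone is not quite a proof. I would therefore make it rigorous with the explicit solution, or equivalently an integrating factor on the upper-deviation quantity. Let $\Lambda(t)=\int_0^t f_\tau(s)\,ds$. Variation of constants gives
\begin{equation*}
a_t = e^{-\Lambda(t)}a_0 + \int_0^t e^{-(\Lambda(t)-\Lambda(s))} f_\tau(s)\, r(s)\, ds .
\end{equation*}
The weights $e^{-\Lambda(t)}$ and $e^{-(\Lambda(t)-\Lambda(s))}f_\tau(s)$ are nonnegative. Their total mass is $e^{-\Lambda(t)}+\int_0^t \frac{d}{ds}e^{-(\Lambda(t)-\Lambda(s))}ds = 1$. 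Hence $a_t$ is a convex combination (a probability average) of $a_0\in\mathcal{I}$ and the values $r(s)\in\mathcal{I}$. Since $\mathcal{I}$ is a convex interval, $a_t\in\mathcal{I}$ for every $t\ge0$.

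An alternative route, if one prefers to avoid the closed form, is to set $w_t=(a_t-A_{\max})_+$. One shows $\tfrac{d}{dt}\tfrac12 w_t^2 = -f_\tau w_t(a_t - r)\le -\tau_{\min}w_t^2\le 0$ almost everywhere, using $a_t-r\ge a_t-A_{\max}$ on $\{w>0\}$. With $w_0=0$ this yields $w\equiv0$, and the symmetric argument handles $A_{\min}$.

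The main obstacle is not the computation but the regularity bookkeeping. The gates depend on $\mathbf{u}_t$, so I must state the standing assumption that $f_\tau,f_\phi$ are locally integrable in $t$, which guarantees existence, uniqueness and absolute continuity of $a_t$. I must also check that the total-mass identity holds even if $\Lambda(t)\to\infty$; it does, since each weight stays nonnegative. A last caveat is that $A_{\min},A_{\max}$ are finite: this follows from $|r|\le \max(|\phi_{\min}|,|\phi_{\max}|)/\tau_{\min}$. I expect the write-up to be short once these assumptions are recorded.
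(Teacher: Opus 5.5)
Your proposal is correct, but its rigorous core differs from the paper's proof. The paper's proof is exactly your ``key observation'' and nothing more. It rewrites the dynamics as $\dot a_t=-f_\tau(a_t-A)$ with $A=f_\phi/f_\tau$, checks that $\dot a_t\le 0$ at $a_t=A_{\max}$ and $\dot a_t\ge 0$ at $a_t=A_{\min}$, and concludes that $\mathcal{I}$ is forward invariant. That is a Nagumo-type check: short, and it needs no solution formula. As written, though, it only inspects the sign on the boundary, which does not by itself rule out exit in the tangential case $\dot a_t=0$. It also tacitly assumes $\dot a_t$ exists pointwise. Closing it requires either a Nagumo invariance theorem plus uniqueness (available here, since the right-hand side is Lipschitz in $a$), or the remark that $\dot a_t<0$ strictly whenever $a_t>A_{\max}$, because $a_t-r(t)\ge a_t-A_{\max}>0$. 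Your $w_t=(a_t-A_{\max})_+$ argument is precisely the rigorous form of that remark. Your main route instead uses linearity in $a$ to write $a_t$ as an exponentially weighted probability average of $a_0$ and the targets $r(s)$. This is the continuous-time analogue of the convex combination $a_{n+1}=(1-\alpha_n)a_n+\alpha_nA_n$ that the paper itself uses in Lemma~\ref{lemma:euler_stability}. It works for Carath\'eodory solutions with merely locally integrable gates and needs no boundary analysis. Both of your routes also give more than the theorem claims: the distance from $a_t$ to $\mathcal{I}$ is at most $e^{-\Lambda(t)}\le e^{-\tau_{\min}t}$ times its initial value, so $\mathcal{I}$ is exponentially attracting. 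That is where the uniform bound $\tau_{\min}>0$ does real work beyond making $A_{\min},A_{\max}$ finite; in the paper's boundary check only $f_\tau>0$ enters. One small correction: the total-mass identity holds for each finite $t$ simply because $\Lambda$ is finite and absolutely continuous on $[0,t]$. Nonnegativity of the weights is what the convex-combination conclusion needs, not the identity, and the behavior of $\Lambda(t)$ as $t\to\infty$ plays no role.
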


\begin{proof}
Let the dynamics be $\dot{a}_t = -f_\tau(a_t - A)$, where $A = f_\phi/f_\tau$. Note that by definition, $A_{\min} \le A \le A_{\max}$ for all $t$. We examine the boundary conditions of the interval $\mathcal{I}$:
\begin{itemize}
    \item \emph{Upper Bound:} Consider the case where the state reaches the upper boundary, $a_t = A_{\max}$. The time derivative is:
    \begin{equation}
    \dot{a}_t = -f_\tau (A_{\max} - A)
    \end{equation}
    Since $f_\tau > 0$ and $(A_{\max} - A) \ge 0$, it follows that $\dot{a}_t \le 0$. Thus, the derivative points inward or is zero, preventing the trajectory from crossing above $A_{\max}$.
    
    \item \emph{Lower Bound:} Consider the case where $a_t = A_{\min}$. The time derivative is:
    \begin{equation}
    \dot{a}_t = -f_\tau (A_{\min} - A)
    \end{equation}
    Since $(A_{\min} - A) \le 0$, the term $-(A_{\min} - A) \ge 0$. Given $f_\tau > 0$, it follows that $\dot{a}_t \ge 0$. The derivative points inward, preventing the trajectory from crossing below $A_{\min}$.
\end{itemize}
Therefore, the set $\mathcal{I}$ is forward invariant. The system is forward invariant and uniformly bounded under bounded gating functions, despite time-varying inputs.
\end{proof}

\begin{lemma}[Numerical Stability of Euler Discretization]\label{lemma:euler_stability}
For the discrete update $a_{n+1} = a_n + \Delta t (-f_{\tau,n} a_n + f_{\phi,n})$ to remain numerically stable and non-oscillatory, the step size $\Delta t$ must satisfy:
\begin{equation}
\Delta t \le \frac{1}{\sup_t f_\tau}.
\label{eq:euler_stability}
\end{equation}
\end{lemma}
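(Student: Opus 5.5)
Rewrite the Euler step in the same equilibrium form used in the proof of Theorem~\ref{theorem:state_stability}. Set $A_n = f_{\phi,n}/f_{\tau,n}$. Then the update becomes
\begin{equation*}
a_{n+1} - A_n = (1 - \Delta t\, f_{\tau,n})(a_n - A_n),
\end{equation*}
or equivalently
\begin{equation*}
a_{n+1} = (1-\lambda_n)\,a_n + \lambda_n A_n, \qquad \lambda_n = \Delta t\, f_{\tau,n}.
\end{equation*}
This exposes the per-step amplification factor $\rho_n = 1-\Delta t\, f_{\tau,n}$ of the deviation from the instantaneous equilibrium.

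I will make ``numerically stable and non-oscillatory'' precise in three ways: $|\rho_n|<1$ (stability), $\rho_n \ge 0$ (no sign flip of $a_n - A_n$, i.e.\ no overshoot), and preservation of the invariant interval $\mathcal{I}$ from Theorem~\ref{theorem:state_stability}. Stability alone only needs $\Delta t < 2/f_{\tau,n}$. Non-oscillation needs $\rho_n \ge 0$, i.e.\ $\Delta t\, f_{\tau,n} \le 1$. Requiring this uniformly in $n$ gives exactly Eq.~\ref{eq:euler_stability}. Since $f_{\tau,n} \ge \tau_{\min} > 0$, we also get $\rho_n \le 1 - \Delta t\,\tau_{\min} < 1$, so the contraction is strict.

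Next I show the discrete analogue of forward invariance. Under Eq.~\ref{eq:euler_stability}, $\lambda_n \in (0,1]$, so $a_{n+1}$ is a convex combination of $a_n$ and $A_n$. By induction, if $a_0 \in \mathcal{I}$ and $A_n \in \mathcal{I}$ for all $n$, then $a_n \in \mathcal{I}$ for all $n$. This gives the monotone, non-overshooting approach to equilibrium. Unrolling the recursion gives
\begin{equation*}
a_n - A^\ast = \prod_k \rho_k\,(a_0 - A^\ast) \quad \text{when } A_n \equiv A^\ast \text{ is constant},
\end{equation*}
so the deviation decays geometrically with rate at most $(1-\Delta t\,\tau_{\min})^n$.

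For necessity, I use a counterexample. If $\Delta t > 1/\sup f_\tau$, pick a step with $f_{\tau,n}$ close to the supremum, so that $\rho_n < 0$. Then $a_{n+1} - A_n$ has the opposite sign to $a_n - A_n$. With constant gates this alternates forever (oscillation), and starting from $a_n = A_{\max}$ with $A_n < A_{\max}$ it leaves $\mathcal{I}$. I expect this step to be the main subtlety. Because a supremum need not be attained, ``must satisfy'' should be read as: the bound is required to guarantee non-oscillation uniformly over all admissible gate values. I will also state explicitly that the weaker condition $\Delta t < 2/\sup f_\tau$ is enough for bare boundedness but not for monotonicity.
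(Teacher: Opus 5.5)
Your proposal is correct and follows essentially the paper's route: it rewrites the step as the convex combination $a_{n+1}=(1-\Delta t f_{\tau,n})a_n+\Delta t f_{\tau,n}A_n$ and demands $0\le \Delta t f_{\tau,n}\le 1$ uniformly in $n$. Your separation of $|\rho_n|<1$ (which only needs $\Delta t<2/f_{\tau,n}$) from $\rho_n\ge 0$, and your sign-flip necessity argument, sharpen the paper's looser closing claim that violating the bound causes ``oscillations or divergence.''

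One small correction: starting from $a_n=A_{\max}$ with merely $A_n<A_{\max}$, the iterate need not leave $\mathcal{I}$, because the overshoot $|\rho_n|(A_{\max}-A_n)$ can be smaller than $A_n-A_{\min}$; choose $A_n=A_{\min}$ in that counterexample instead.
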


\begin{proof}
Rearranging the Euler update yields:
\begin{equation}
a_{n+1} = a_n(1 - \Delta t f_{\tau,n}) + \Delta t f_{\tau,n} \left( \frac{f_{\phi,n}}{f_{\tau,n}} \right).
\end{equation}
Let $\alpha_n = \Delta t f_{\tau,n}$ and target $A_n = f_{\phi,n}/f_{\tau,n}$. The equation becomes a convex combination:
\begin{equation}
a_{n+1} = (1 - \alpha_n)a_n + \alpha_n A_n.
\end{equation}
For $a_{n+1}$ to remain a convex average of the previous state $a_n$ and the instantaneous target $A_n$ (thereby preserving the bounds derived in Theorem~\ref{theorem:state_stability}), we require the coefficients to be non-negative:
\begin{equation}
0 \le \alpha_n \le 1 \implies 0 \le \Delta t f_{\tau,n} \le 1.
\end{equation}
Since $f_\tau > 0$, stability requires $\Delta t \le 1/f_{\tau,n}$. To guarantee stability across the entire horizon, we select $\Delta t \le 1 / \sup_t f_\tau$. Violating this condition (stiffness) results in numerical oscillations or divergence.
\end{proof}

\begin{theorem}[LAN as Dynamical Bridge between SDPA and CT-RNNs]\label{theorem:dynamical_bridge}

The LAN occupy an intermediate position between the memory-free SDPA mechanism and the fully stateful CT-RNN. This interpolation is governed exclusively by the effective time constant of the system $\tau_{sys}:= 1/f_{\tau}(\mathbf{u}_t)$, which LAN learns adaptively per query-key pair. Specifically:

\textbf{(i) Attention limit ($\tau_{sys} \to 0)$:} Suppose the content and time-constant gates satisfy the ratio constraint:
\begin{equation}
\frac{f_\phi(\mathbf{u}_t,t,\theta_\phi)}{f_\tau(\mathbf{u}_t,t,\theta_\tau)}   \approx \frac{q^Tk}{\sqrt{d}}
\end{equation}
As $\tau_{sys} \to 0$ (meaning $f_\tau \to \infty$), the ODE becomes singularly perturbed, and the transient dynamics decay instantaneously, snapping the quasi-state $f_\phi/{f_\tau}$. Furthermore, under the maximum stable step size $\Delta t = 1/f_\tau$ (Lemma~\ref{lemma:euler_stability}), the single step Euler update from the initial logit state $a_0 =0$ yields:
\begin{equation}
a_1 = (1- \Delta t f_\tau)a_0 + \Delta t f_\phi = \frac{f_\phi}{f_\tau} \approx \frac{q^Tk}{\sqrt{d}}
\end{equation}
This approximates the SDPA logit. After softmax normalization, this gives LAN(q,k,v)$\to$ SDPA(Q,K,V): attention with no temporal memory, indifferent to the order or timing of observations.

\textbf{(ii) CT-RNN limit ($\tau_{sys}=\tau$):} Let the gating be feedforward (non-recurrent) function and suppose:
\begin{equation}
f_\tau = \frac{1}{\tau},  \qquad f_\phi = \frac{1}{\tau}\cdot\sigma(W_\phi \mathbf{u}_t + b_\phi)    
\end{equation}
for a fixed global time-constant $\tau>0$, bounded nonlinearity $\sigma$, and learnable weights $W_\phi, b_\phi$. Substituting this into the LAN dynamics yields:
\begin{equation}
    \tau \dot{a_t} = -a_t + \sigma(W_\phi \mathbf{u}_t + b_\phi)
\end{equation}
which exactly recovers the leaky-integrator equation of a CT-RNN with hidden states $h_t := a_t$, input $\mathbf{u}_t = [\mathbf{q};\mathbf{k}]$, and fixed time-constant $\tau$.

\textbf{(iii) LAN as Adaptive Middle Ground:} In unrestricted setting, $f_\tau(\mathbf{u}_t)$ is a learned, input-dependent function, inducing a per-token-pair effective constant $\tau_{sys}(\mathbf{u}_t) = 1/f_\tau(\mathbf{u}_t)$ that varies continuously across the sequence. This encodes two quantities that are mutually exclusive in limiting regimes: content alignment (through $f_\phi$, mirroring attention) and temporal persistence (through $f_\tau$, mirroring CT-RNN memory). For token pairs where $f_\tau$ is large, $\tau_{sys} \to 0$ and LAN behavior is attention-like, instantaneously reflecting content relevance. For pairs where $f_\tau$ is small, $\tau_{sys} \to \infty$ and LAN accumulates history in a CT-RNN fashion.
\end{theorem}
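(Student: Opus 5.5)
I would prove the three parts separately, since each is a direct consequence of the LAN dynamics in Eq.~\ref{eq:liquid_ode} under a particular parameterization of the gates, together with Theorem~\ref{theorem:state_stability} and Lemma~\ref{lemma:euler_stability}. The common first step is to rewrite Eq.~\ref{eq:liquid_ode} as $\dot a_t = -f_\tau(a_t - A_t)$ with $A_t = f_\phi/f_\tau$. This separates the equilibrium target $A_t$ from the relaxation rate $f_\tau = 1/\tau_{sys}$, and all three regimes are then statements about how $\tau_{sys}$ compares with the time scale on which $A_t$ varies.

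\textbf{Part (i).} I would first hold the gates frozen on a refinement interval and solve the linear ODE explicitly:
\[
a_t = A + (a_0 - A)e^{-f_\tau t}.
\]
For any fixed $t>0$, letting $f_\tau\to\infty$ gives $a_t \to A$ uniformly away from $t=0$. For slowly varying gates, I would use the variation-of-constants formula
\[
a_t - A_t = (a_0-A_0)e^{-\int_0^t f_\tau} - \int_0^t e^{-\int_s^t f_\tau}\,\dot A_s\, ds .
\]
This gives $|a_t - A_t| \le |a_0-A_0|e^{-\tau_{\min}t} + \sup|\dot A|/\tau_{\min}$, so the tracking error is $O(\tau_{sys})$. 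That is the singular-perturbation (Tikhonov) snapping to the quasi-steady state.

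For the discrete version, I would substitute $\Delta t = 1/f_\tau$ into the convex-combination form from the proof of Lemma~\ref{lemma:euler_stability}, which gives $\alpha=1$ and hence $a_1 = A$ exactly from $a_0=0$. The ratio hypothesis then makes $a_1 \approx q^Tk/\sqrt d$. Because softmax is Lipschitz, an $\varepsilon$-approximation of the logits gives an $O(\varepsilon)$ approximation of the attention weights, and therefore of $\mathrm{LAN}(q,k,v)$ versus SDPA.

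\textbf{Part (ii).} This is a direct substitution: multiply Eq.~\ref{eq:liquid_ode} by $\tau$ with $f_\tau=1/\tau$ and $f_\phi=\sigma(\cdot)/\tau$. I would also note the consequences of the earlier results in this case. Theorem~\ref{theorem:state_stability} shows the state stays within the range of $\sigma$, and Lemma~\ref{lemma:euler_stability} gives the step-size condition $\Delta t\le\tau$.

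\textbf{Part (iii).} This part is interpretive, and I would make it precise through the same variation-of-constants formula with input-dependent $f_\tau$. The kernel $e^{-\int_s^t f_\tau}$ shows that $a_t$ is an exponentially weighted average of past targets with effective memory length $\tau_{sys}$.
\begin{itemize}
\item For large $f_\tau$, the kernel concentrates at $s=t$, so $a_t\approx A_t$ and the behavior is attention-like.
\item For small $f_\tau$, the kernel flattens and history is integrated over a long window, which is CT-RNN-like.
\end{itemize}
Both limits follow as pointwise-in-$\mathbf u_t$ instances of parts (i) and (ii).

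\textbf{Main obstacle.} The hardest step is making ``$\approx$'' in part (i) rigorous. The paper's claim holds only when the gates are frozen during the step or $\dot A$ is bounded. I would first try to assume $f_\phi/f_\tau$ is Lipschitz in $t$ and bound the error by $\sup|\dot A|\,\tau_{sys}$. If the gates are themselves $t$-dependent in an uncontrolled way, I would fall back on the exact one-step Euler identity, which needs no such assumption.
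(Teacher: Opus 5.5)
Your proposal is correct and coincides with the paper on the discrete half of (i) and on (ii). For (i) you set $\Delta t = 1/f_\tau$, so that $1-\Delta t f_\tau = 0$ and $a_1 = f_\phi/f_\tau$ (from any $a_0$, in fact), then apply the ratio hypothesis. For (ii) you substitute the gates and multiply by $\tau$.

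The difference lies in the continuous claims. The paper handles the attention limit only formally: it divides the ODE by $f_\tau$ and lets $\dot a_t/f_\tau \to 0$ to get $a_t = f_\phi/f_\tau$. Its argument for (iii) is a one-sentence remark. You instead apply variation of constants to $a_t - A_t$, which gives a real estimate, $|a_t-A_t|\le |a_0-A_0|e^{-\tau_{\min}t}+\sup|\dot A|/\tau_{\min}$. This bound does two things:
\begin{itemize}
\item It exposes an initial layer of width $O(1/\tau_{\min})$ in which the paper's reduction is false: there $\dot a_t/f_\tau \approx A_0 \neq 0$, since $a_0 = 0$.
\item It gives an $O(\sup \tau_{sys})$ tracking error after that layer.
\end{itemize}
The Lipschitz continuity of softmax then turns ``$\mathrm{LAN}\to\mathrm{SDPA}$'' into an $O(\varepsilon)$ statement. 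The price is the extra hypothesis that $A_t$ has bounded derivative, which you flag. The paper's formal step tacitly needs the same thing, namely $\dot a_t = o(f_\tau)$.

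For (iii), your kernel view is the right way to give the claim content, with one precision. The normalized weights are $f_\tau(s)e^{-\int_s^t f_\tau}$ on $A_s$, plus residual mass $e^{-\int_0^t f_\tau}$ on $a_0$. So $a_t$ is a convex combination, the continuous analogue of the form used in Lemma~\ref{lemma:euler_stability}. It is this kernel argument, not part (ii), that carries the small-$f_\tau$ claim, because (ii) requires a constant $f_\tau$ and so is not a pointwise instance of the input-dependent case.
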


\begin{proof}
\textbf{(i)} In continuous limit, the term $\dot{a_t}/f_\tau$ approaches 0, reducing the ODE to the algebraic equation $a_t = f_\phi/f_\tau$. Discretely, from the Euler update with $a_0 = 0$ and $\Delta t = 1/f_\tau$:
\begin{equation}
a_1 = a_0 (1- \Delta t f_\tau) + \Delta t f_\phi = 0.(1-1) + \frac{f_\phi}{f_\tau}  = \frac{f_\phi}{f_\tau}    
\end{equation}
applying the stated constrains give $a_1 \approx q^\top k/ \sqrt{d}$. $\mathrm{softmax}$ over all key positions then approximate SDPA output.

\textbf{(ii)}  Substituting the assumed gate parameterizations into LAN ODE:
\begin{equation}
\dot{a_t} = - \frac{1}{\tau} a_t + \frac{1}{\tau} \sigma( W_\phi \mathbf{u}_t + b_\phi) 
\end{equation}
multiplying by $\tau$ yields $\tau \dot{a_t} = -a_t + \sigma(W_\phi \mathbf{u}_t + b_\phi)$. Setting $h_t := a_t$ recovers the CT-RNN dynamics.

\textbf{(iii)} In general, neither restrictive condition holds. $f_\tau(\mathbf{u}_t)$ is a continuously valued function learned jointly with $f_\phi(\mathbf{u}_t)$, interpolating between the two regimes on a per-token pair basis without committing globally to either.
\end{proof}

\begin{corollary}
Let $\mathscr{F}_{\text{SDPA}}$ and $\mathscr{F}_{\text{CT-RNN}}$ denote the respective function classes of SDPA and CT-RNNs. These classes are incomparable as SDPA cannot represent temporal history, and CT-RNNs cannot selectively suppress memory on a per-token pair basis. The LAN hypothesis space $\mathscr{F}_{\text{LAN}}$ strictly union satisfies of both classes:
:
\begin{equation}
\mathscr{F}_{\text{LAN}} \supsetneq \left\{\mathscr{F}_{\text{CT-RNN}}, \mathscr{F}_{\text{SDPA}} \right\}
\end{equation}
\end{corollary}

\begin{proof}
SDPA logits $q^\top k /\sqrt{d}$ are time-independent; they carry no sequential state and cannot represent any dynamical system with non-trivial transient behavior. Conversely, a CT-RNN with a fixed global time constant $\tau$ cannot zero out its memory selectively for a subset of input pairs. The Theorem~\ref{theorem:dynamical_bridge} demonstrates that LAN recovers both under distinct, well-defined parameterizations of $f_\tau$ and $f_\phi$.
\end{proof}


\subsubsection{Designing LAN as Neural Network Layer}
We now describe the design of LAN as neural network layer informed by the preceding analysis. The process consists of six steps: (i) sparse input curation; (ii) gating mechanism; (iii) handling \(\Delta t\); (iv) attention weights; (v) attention output; and (vi) extension to multi-head. Figure~\ref{fig:fluid_architecture} illustrate the architectural overview of LAN.

\begin{wrapfigure}{r}{0.5\textwidth}
\vspace{-8mm}
\begin{minipage}{0.48\textwidth}
\begin{algorithm}[H]
\small
\caption{Sparse Top-\emph{K} Pairwise Concatenation}
\label{algo:sparse_topk}
\begin{algorithmic}
\Require Keys $K \in \mathbb{R}^{B \times H \times T_k \times D}$,  
Top-\emph{K} value $K$
\Ensure concatenated tensor  
$U \in \mathbb{R}^{B \times H \times T_q \times K_{\text{eff}} \times 2D}$
\State Scores: $S \gets Q \cdot K^\top$
\State Effective Top-\emph{K}: $K_{\text{eff}} \gets \min(K, T_k)$
\State Indices: $I_{\text{topk}} \gets \text{top\_k}(S, K_{\text{eff}})$
\State Gather: $K_{\text{selected}} \gets \text{gather}(K, I_{\text{topk}})\in 
\mathbb{R}^{B \times H \times T_q \times K_{\text{eff}} \times D}$
\State Tiled: $Q_{\text{tiled}} \gets \text{tile}(Q, K_{\text{eff}})\in 
\mathbb{R}^{B \times H \times T_q \times K_{\text{eff}} \times D}$
\State Concat: $U_{\text{topk}} \gets [\,Q_{\text{tiled}};\; K_{\text{selected}}\,] \in 
\mathbb{R}^{B \times H \times T_q \times K_{\text{eff}} \times 2D}$
\State \Return $U_{\text{topk}}$
\end{algorithmic}
\end{algorithm}
\end{minipage}
\end{wrapfigure}

\textbf{Sparse Input Curation:} We experimented with different strategies for constructing query--key inputs. Initially, we implemented full pairwise concatenation, where queries $Q \in \mathbb{R}^{B \times H \times T_q \times D}$ are combined with all keys $K \in \mathbb{R}^{B \times H \times T_k \times D}$ to form a joint tensor $U \in \mathbb{R}^{B \times H \times T_q \times T_k \times 2D}$. While this preserved complete feature information and enabled expressive, learnable similarity functions, it was memory-intensive, making it impractical for longer sequences. To mitigate this, we applied a sparse Top-\emph{K} optimization: for each query, we compute pairwise scores $S = Q \cdot K^\top \in \mathbb{R}^{B \times H \times T_q \times T_k}$, select the Top-$K_{\text{eff}} = \min(K, T_k)$ keys, and construct concatenated pairs $U_{\text{topk}} \in \mathbb{R}^{B \times H \times T_q \times K_{\text{eff}} \times 2D}$. This approach preserves the most relevant interactions while substantially reducing memory requirements in the concatenation and subsequent processing stages. Algorithm~\ref{algo:sparse_topk} outlines the steps required for input curation.

\textbf{Gating Mechanism:} Starting from Eqn.~\ref{eq:liquid_ode}, consider the trajectory of a query--key pair with initial condition $a_0=0$:  
\begin{equation}
a_{n+1} = a_n + \Delta t(-f_\tau a_n + f_\phi),
\end{equation}
where the gates are parameterized using two independent projection heads over a shared recurrent-based\(\mathcal{R}_g\) (either LSTM or GRU). It allows the gates to capture temporal dependencies and helps achieve faster convergence during training.  
\begin{align}
f_\phi(\mathbf{u_t}) &=  \mathrm{tanh}(\mathcal{R}_g( \mathbf{u_t},t,\theta_\phi))\\
f_\tau(\mathbf{u_t}) &= \mathrm{softplus}(\mathcal{R}_g( \mathbf{u_t},t,\theta_\tau)) + \varepsilon, \quad \varepsilon > 0
\label{eq:tau_calc}
\end{align}
Here, $f_\phi$ serves as a \emph{content-target} gate with \(\mathrm{tanh}\) activation function, allowing both negative and positive logits to proceed, while $f_\tau$ is a strictly positive \emph{time--constant} gate, controlling both the rate of convergence and the steady-state amplitude. Intuitively, this shared recurrent gating: $f_\phi$ decides \emph{what} content to emphasize, while $f_\tau$ governs \emph{how quickly} and \emph{to what extent} it appears. 

\textbf{Handling (\(\Delta t\)):} The attention dynamics can become stiff when $f_\tau$ reaches large values, and naive \textit{explicit Euler} integration may become numerically unstable. We therefore enforce the stability constraint (Lemma~\ref{lemma:euler_stability}) by adaptively clamping the effective \(\Delta t\) based on the maximum predicted $f_\tau$ at each forward pass. This simple control prevents exploding \(a_n\) and numerical divergence. Empirically, it leads to more stable training, faster convergence, and improved performance compared to using a fixed, unconstrained step size.

\textbf{Attention Weights:} Normalizing across all keys via $\mathrm{softmax}$ yields attention weights \(\alpha_t = \mathrm{softmax}(a_t),\) defining a valid probability distribution where $f_\phi$ amplifies or suppresses content alignments, and $f_\tau$ shapes both the speed and saturation of these preferences.

\textbf{Attention output:} Finally, the attention output is computed by multiplying the attention weights with the value matrix:
\begin{equation}
\text{LAN}(q, k, v) = \alpha_t \cdot \mathbf{v_t}
\end{equation}
\textbf{Extension to Multi-head:} To scale this mechanism to multi-head attention, we project the input sequence into $H$ independent subspaces (heads) of dimension $d_\text{model}/H$, yielding query, key, and value tensors $(q^{(h)}, k^{(h)}, v^{(h)})$ for $h \in \{1, \dots, H\}$. For each head, pairwise logits are computed using Eqn.~\ref{eq:liquid_euler}, followed by the $\mathrm{softmax}$ normalization to calculate attention weights. The resulting attention weights $\alpha^{(h)}_t$ are then used to multiply with the value vector $v^{(h)}$, producing head-specific attention outputs. Finally, these outputs are concatenated and linearly projected back into the model dimension. This formulation ensures that each head learns distinct dynamic compatibilities governed by its own parameterization of $f_\phi$ and $f_\tau$, while the aggregation across heads preserves the expressive capacity of the standard multi-head attention mechanism.


\begin{wrapfigure}{r}{0.5\columnwidth}
\centering
\vspace{-5mm}
\includegraphics[width=0.49\columnwidth]{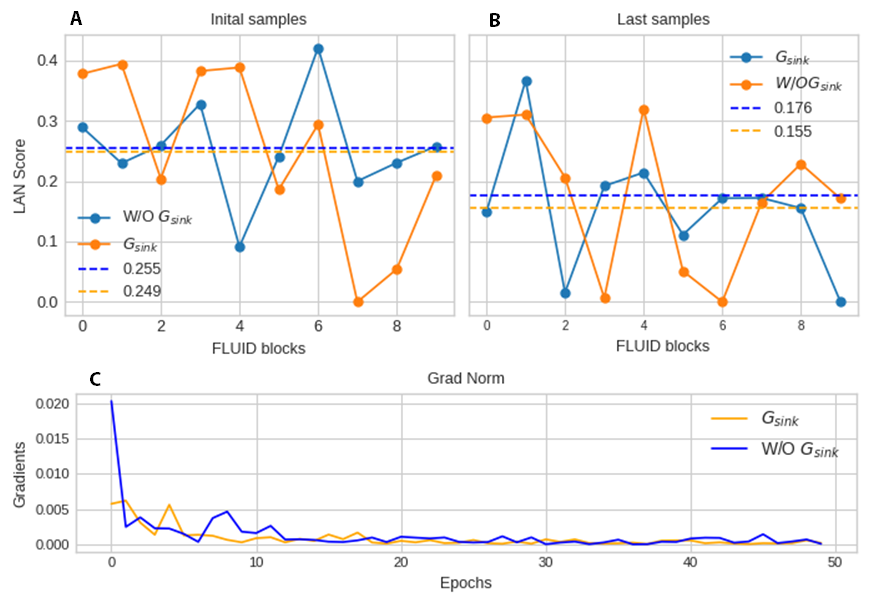}
\caption{Comparison of attention mass distribution with and without \(G_{\text{sink}}\). \textbf{Setup:} The test is conducted on XJTU-SY Bearing~1 from first operating condition with FLUID, using a 10-blocks Transformer trained for 50 epochs with full pairwise query-key concatenation and standard residual connections. \textbf{Test:} \textbf{(A)} The first 200 samples, showing the attention sink, and \textbf{(B)} the last 200 samples, showing the performance gains. Attention without \(G_{\text{sink}}\) in \textbf{(A)} shows a 2.43\% decrease, dropping from 0.255 to 0.249 on average, while in \textbf{(B)} the last samples improve from 0.155 to 0.176, corresponding to a 13.5\% gain. \textbf{(C)} Gradient norm comparison over 50 epochs, showing that including \(G_{\text{sink}}\) reduces initial gradient spikes and stabilizes training.}
\label{fig:sink_result}
\vspace{-10pt}
\end{wrapfigure}

\subsection{Attention-sink gate (\(G_{\text{sink}}\))}
\textbf{Attention sink:} Attention-sink occurs when the Transformer allocates disproportionate attention to early tokens due to $\mathrm{softmax}$ normalization, even when these tokens carry little relevant information~\cite{xiao2023efficient}. In LAN, an attention sink may correspond to a node that tends to accumulate disproportionately large attention, acting as a potential stable attractor of the dynamics. In irregular time-series, this effect is amplified by non-uniform sampling and varying temporal gaps. Early observations, regardless of temporal significance, can accumulate disproportionate attention. This introduces a systematic bias that overemphasizes initial events, even though informative patterns are often sparse, temporally localized, or interval-dependent. Accounting for attention sinks is therefore essential to ensure attention reflects temporal relevance rather than artifacts of sequence ordering or dynamical attractors.

Several approaches have been proposed to mitigate attention sinks, including replacing $\mathrm{softmax}$ with unnormalized sigmoid attention~\cite{gu2024attention}, clipping or calibrating attention scores~\cite{bondarenko2023quantizable}, and modifying the $\mathrm{softmax}$~\cite{zuhri2025softpick} computation. While sparse Top-\emph{K} attention~\cite{zhao2019explicit} alleviates the issue to some extent, it does not eliminate it. A recent study showed that adding a gating mechanism after $\mathrm{softmax}$ fully addresses the problem~\cite{qiu2025gated} and stabilizes transformer training. In LAN, we adopt this technique by adding a sigmoid gate after the multihead output by computing query-dependent sigmoid gating scores, which are then applied via element-wise multiplication ($\odot$) to the multihead output as illustrated in Figure~\ref{fig:fluid_architecture}. \textit{Intuitively}, this introduces input-dependent sparsity that selectively suppresses irrelevant contributions. As shown in Figure~\ref{fig:sink_result}(A), this reduces the attention allocated to initial nodes by 2.43\%, decreasing from 0.255 to 0.249, thereby mitigating the attention sink effect. In contrast, Figure~\ref{fig:sink_result}(B) shows that attention to later nodes increases by 13.5\%, rising from 0.155 to 0.176, indicating a more balanced distribution of attention. Furthermore, Figure~\ref{fig:sink_result}(C) demonstrates that incorporating $G_{\text{sink}}$ reduces early gradient spikes and improves training stability, consistent with the role of sparse gating in controlling massive activations. Formally, the post \(G_{\text{sink}}\) is computed as:
\begin{equation}
O_{\text{LAN}} = \sigma(\mathbf{x_t}) \odot \Big(\mathrm{Concat}(h_1, h_2, \dots,h_H) W_g + b_g\Big) 
\label{eq:gsink}
\end{equation}

\subsection{Hyper-Connections (\(\mathcal{HC}\))}
\textbf{Seesaw effect:} Residual connections are widely used in deep neural networks to facilitate identity mapping and form a core component of the Transformer architectures. However, they exhibit a trade-off known as the seesaw effect, reflecting an inherent tension between gradient propagation and representation collapse due to fixed residual strength. Strong residual paths (PreNorm) promote stable gradient flow but cause adjacent layers to produce highly similar representations, reducing the effective contribution of network depth. Conversely, weaker residual influence (PostNorm) increases representational diversity but raises the risk of vanishing gradients. Consequently, improving one aspect tends to degrade the other, revealing a fundamental limitation of conventional residual connections~\cite{zhu2024hyper}.

To accommodate the seesaw effect in FLUID, we replace residual connections with hyper-connections ($\mathcal{HC}$). Formally,  Let $h^{(l-1)} \in \mathbb{R}^d$ denote the hidden representation entering layer $l$, with $x^{(0)}$ as the network input. $\mathcal{HC}$ expand this representation into $n$ parallel streams. Specifically, we initialize
\begin{equation}
\mathbf{X}^{(0)} =
\begin{bmatrix}
x^{(0)} \\
x^{(0)} \\
\vdots \\
x^{(0)}
\end{bmatrix}
\in \mathbb{R}^{n \times d},
\end{equation}
Here, $n$ is the expansion rate. For layer $l$, the input hyper-hidden matrix is
\begin{equation}
\mathbf{H}^{(k-1)} =
\begin{bmatrix}
h^{(l-1)}_1 \\
h^{(l-1)}_2 \\
\vdots \\
h^{(l-1)}_n
\end{bmatrix}
\in \mathbb{R}^{n \times d},
\end{equation}
For clarity, we omit the layer index $l$ when the context is unambiguous and write $\mathbf{H}$. $\mathcal{HC}$ are defined by a structured matrix
\begin{equation}
\begin{split}
\mathcal{HC} &=
\begin{pmatrix}
0_{1\times1} & \mathbf{B} \\
\mathbf{A_m} & \mathbf{A_r} 
\end{pmatrix}  = 
\begin{pmatrix}
0 & \beta_1 & \beta_2 & \cdots & \beta_n \\
\alpha_{1,0} & \alpha_{1,1} & \alpha_{1,2} & \cdots & \alpha_{1,n} \\
\alpha_{2,0} & \alpha_{2,1} & \alpha_{2,2} & \cdots & \alpha_{2,n} \\
\vdots & \vdots & \vdots & \ddots & \vdots \\
\alpha_{n,0} & \alpha_{n,1} & \alpha_{n,2} & \cdots & \alpha_{n,n} 
\end{pmatrix} \in \mathbb{R}^{(n+1)\times(n+1)}, 
\label{eq:hc_matrix}
\end{split}
\end{equation}
where $\mathbf{B} \in \mathbb{R}^{n}$ controls the contribution of the current layer, $\mathbf{A_m} \in \mathbb{R}^{n}$ aggregates hyper-hidden states into a single input, and $\mathbf{A_r} \in \mathbb{R}^{n \times n}$ defines residual mixing across streams.

\begin{wrapfigure}{r}{0.5\textwidth}
\begin{minipage}{0.47\textwidth}
\vspace{-5mm}
\begin{algorithm}[H]
\small
\caption{Network with Hyper-Connections}
\label{algo:network_hyper_connections}
\begin{algorithmic}
\Require Initial input vector $x^{(0)} \in \mathbb{R}^d$, Expansion rate $n$
\Ensure Final output $y$
\State Initialize: $\mathbf{X}^{(0)} \gets x^{(0)} \ x^{(0)} \ \dots \ x^{(0)})^\top \in \mathbb{R}^{n \times d}$
\For{$l = 1$ to $\mathscr{L}$}
\State $\mathbf{H} \gets \mathbf{H}^{(l-1)}$
\State Width Connections: $( x_0 \ \mathbf{H'}) \gets {\mathcal{WC}}^{l}\top \mathbf{H}$
\State Layer Computation: $x'_0 \gets \mathscr{L}^l(x_0)$
\State Depth Connections: $\hat{\mathbf{H}} \gets {\mathbf{B}^l\top} (x'_0)^\top + \mathbf{H^{'}}$
\State $\mathbf{H}^l \gets \mathbf{\hat{H}}$
\EndFor
\State \textbf{Final Output:}
\State $h^{\mathscr{L}} \gets$ sum rows of $H^{\mathscr{L}}$
\State $h^{\mathscr{L}} \gets$ LayerNorm$(h^{\mathscr{L}})$
\State $y \gets$ Output Layer$(h^{\mathscr{L}})$
\State \Return $y$
\end{algorithmic}
\end{algorithm}
\end{minipage}
\vspace{5mm}
\end{wrapfigure}

\textbf{Layer Computation: } Let $\mathscr{L}(\cdot)$ denote a FLUID transformer sublayer (LAN or FFN). The aggregated input to the sublayer is computed as
\begin{equation}
\mathbf{x_0}^\top = \mathbf{A_m}^\top \mathbf{H}.
\label{eq:hc_input}
\end{equation}
Residual propagation across hyper-hidden states is given by
\begin{equation}
\mathbf{H}_{\text{r}} = \mathbf{A_r}^\top \mathbf{H},
\label{eq:hc_residual}
\end{equation}
The final hyper-hidden representation is obtained by broadcasting the sub-layer output back into the hyper-hidden space and adding the residual connection, giving
\begin{equation}
\widehat{\mathbf{X}} = \mathbf{B}^\top \mathscr{L}(\mathbf{x_0}^\top) + \mathbf{H}_{\text{r}}
\label{eq:hc_combined}
\end{equation}
This formulation is also known as static hyper-connections (\(\mathcal{HC}_{\text{static}}\))~\cite{zhu2024hyper}. \textit{Depth Connections~(\(\mathcal{DC}\))} correspond to residual-like pathways and can be summarized by 
\begin{equation}
\mathcal{DC} =
\begin{pmatrix}
\mathbf{B} \\
\operatorname{diag}(\mathbf{A_r})
\end{pmatrix}
\in \mathbb{R}^{2 \times n},
\end{equation}
where the first row weights the current layer output, and the second row weights the  identity residual. \textit{Width connections~(\(\mathcal{WC}\))} enable information exchange across streams, and are defined as
\begin{equation}
\mathcal{WC} =
\begin{pmatrix}
\mathbf{A_m} & \mathbf{A_r}
\end{pmatrix}
\in \mathbb{R}^{n \times (n+1)}.
\end{equation}

Algorithm~\ref{algo:network_hyper_connections} summarizes key steps to implement a network with $\mathcal{HC}$.

\subsubsection{Liquid Hyper-Connections (\(\mathcal{HC}_{\text{liquid}}\)):} 
To enable input-dependent (Liquid) connectivity, hyper-connections are defined as functions of hyper-input ($\mathbf{X}$) as
\begin{equation}
\mathcal{HC}_{\text{liquid}}(\mathbf{X}) =
\begin{pmatrix}
0_{1\times1} & \mathcal{B}(\mathbf{X}) \\
\mathcal{A}_m(\mathbf{X}) & \mathcal{A}_r(\mathbf{X})
\end{pmatrix}.
\end{equation}
The layer output is computed identically to the static case, replacing fixed parameters with their liquid counterparts. To stabilize the training process, the input is first normalized, followed by a $\mathrm{tanh}$ activation, scaled by a small initial learning rate. The following equation details how these parameters are computed:
\begin{align}
\widetilde{\mathbf{X}} &= \operatorname{Norm}(\mathbf{X}),\\
\mathcal{B}(\mathbf{X}) &= \mathbf{B} + s_b \odot \tanh(\widetilde{\mathbf{X}}\mathbf{W_b}), \\
\mathcal{A}_m(\mathbf{X}) &= \mathbf{A_m} + s_a \odot \tanh(\widetilde{\mathbf{X}}\mathbf{W_m}), \\
\mathcal{A}_r(\mathbf{X}) &= \mathbf{A_r} + s_a \odot \tanh(\widetilde{\mathbf{X}}\mathbf{W_r}),
\end{align}
where $\mathbf{W_b}$, $\mathbf{W_m}$, and $\mathbf{W_r}$ are learnable projection matrices, $s_b$ and $s_a$ are learnable scaling factors initialized to small values~\cite{zhu2024hyper}.


\begin{table}[t]
\centering
\caption{Performance Score comparison of all models.}
\label{tab:all_results}
\vspace{1mm}
\resizebox{\textwidth}{!}{%
\begin{tabular}{lcccccccccc}
\toprule

\multirow{2}{*}{\textbf{Model}} & \multicolumn{2}{c}{\textbf{Irregular Time-series}} & \multicolumn{2}{c}{\textbf{Long-range Modeling}} & \multicolumn{2}{c}{\textbf{AVs Control}} & \multicolumn{3}{c}{\textbf{Physical Modeling}}\\

\cmidrule(lr){2-3} \cmidrule(lr){4-5} \cmidrule(lr){6-7} \cmidrule(lr){8-10}
 & \textbf{Spiral (↓)} & \textbf{E-MNIST (↑)} & \textbf{ETTm1 (↓)} & \textbf{Jena-Climate (↓)}  & \textbf{Udacity (↓)} & \textbf{CarRacing (↑)} & \textbf{XJTU-SY (↓)} & \textbf{HUST (↓)} & \textbf{PRONOSTIA (↓)}  \\
\midrule
CT-RNN
& 0.0063\textsuperscript{\scriptsize $\pm$0.0008}
& 95.18\textsuperscript{\scriptsize $\pm$0.20}
& 0.0158\textsuperscript{\scriptsize $\pm$0.0125}
& 0.0591\textsuperscript{\scriptsize $\pm$0.0146}
& 0.0202\textsuperscript{\scriptsize $\pm$0.0017}
& \textbf{80.80\textsuperscript{\scriptsize $\pm$0.27}}
& 45.48\textsuperscript{\scriptsize $\pm$20.82}
& 47.35\textsuperscript{\scriptsize $\pm$23.42}
& 222.44\textsuperscript{\scriptsize $\pm$29.88} \\

GRU-ODE
& 0.0048\textsuperscript{\scriptsize $\pm$0.0001}
& 96.04\textsuperscript{\scriptsize $\pm$0.13}
& 0.0783\textsuperscript{\scriptsize $\pm$0.0000}
& 0.0666\textsuperscript{\scriptsize $\pm$0.0074}
& 0.0197\textsuperscript{\scriptsize $\pm$0.0027}
& 80.29\textsuperscript{\scriptsize $\pm$0.72}
& 25.26\textsuperscript{\scriptsize $\pm$9.65}
& 51.04\textsuperscript{\scriptsize $\pm$8.48}
& 195.01\textsuperscript{\scriptsize $\pm$58.99} \\

PhasedLSTM
& 0.1038\textsuperscript{\scriptsize $\pm$0.0107}
& 95.79\textsuperscript{\scriptsize $\pm$0.14}
& 0.1963\textsuperscript{\scriptsize $\pm$0.0088}
& 0.0787\textsuperscript{\scriptsize $\pm$0.0058}
& 0.0184\textsuperscript{\scriptsize $\pm$0.0011}
& 80.65\textsuperscript{\scriptsize $\pm$0.38}
& \textbf{23.19\textsuperscript{\scriptsize $\pm$8.08}}
& 57.96\textsuperscript{\scriptsize $\pm$28.65}
& 233.09\textsuperscript{\scriptsize $\pm$28.54} \\

mmRNN
& 0.0078\textsuperscript{\scriptsize $\pm$0.0009}
& 95.74\textsuperscript{\scriptsize $\pm$0.27}
& 0.0749\textsuperscript{\scriptsize $\pm$0.0313}
& 0.2748\textsuperscript{\scriptsize $\pm$0.0354}
& 0.0213\textsuperscript{\scriptsize $\pm$0.0039}
& 80.13\textsuperscript{\scriptsize $\pm$0.54}
& 34.21\textsuperscript{\scriptsize $\pm$12.61}
& 46.69\textsuperscript{\scriptsize $\pm$19.71}
& 165.59\textsuperscript{\scriptsize $\pm$29.65} \\

LTC-FC
& 0.0050\textsuperscript{\scriptsize $\pm$0.0001}
& 81.25\textsuperscript{\scriptsize $\pm$0.00}
& 0.0659\textsuperscript{\scriptsize $\pm$0.0201}
& 0.1536\textsuperscript{\scriptsize $\pm$0.0000}
& 0.0252\textsuperscript{\scriptsize $\pm$0.0018}
& 76.37\textsuperscript{\scriptsize $\pm$3.01}
& 28.34\textsuperscript{\scriptsize $\pm$13.71}
& 40.31\textsuperscript{\scriptsize $\pm$8.20}
& 144.25\textsuperscript{\scriptsize $\pm$21.96} \\

LTC-NCP
& 0.1312\textsuperscript{\scriptsize $\pm$0.1537}
& 77.77\textsuperscript{\scriptsize $\pm$3.26}
& 0.6246\textsuperscript{\scriptsize $\pm$0.3115}
& 0.2437\textsuperscript{\scriptsize $\pm$0.1005}
& 0.0217\textsuperscript{\scriptsize $\pm$0.0010}
& 79.19\textsuperscript{\scriptsize $\pm$1.55}
& 127.19\textsuperscript{\scriptsize $\pm$5.16}
& 43.08\textsuperscript{\scriptsize $\pm$15.86}
& 138.05\textsuperscript{\scriptsize $\pm$5.09} \\

CfC-FC
& 0.0101\textsuperscript{\scriptsize $\pm$0.0010}
& 94.16\textsuperscript{\scriptsize $\pm$0.49}
& 0.0138\textsuperscript{\scriptsize $\pm$0.0000}
& 0.0672\textsuperscript{\scriptsize $\pm$0.0000}
& 0.0202\textsuperscript{\scriptsize $\pm$0.0020}
& 80.59\textsuperscript{\scriptsize $\pm$0.33}
& 34.34\textsuperscript{\scriptsize $\pm$13.63}
& 56.60\textsuperscript{\scriptsize $\pm$16.35}
& 162.24\textsuperscript{\scriptsize $\pm$32.80} \\

CfC-NCP
& 0.1307\textsuperscript{\scriptsize $\pm$0.1541}
& 80.60\textsuperscript{\scriptsize $\pm$1.47}
& 1.3941\textsuperscript{\scriptsize $\pm$0.3859}
& 138.41\textsuperscript{\scriptsize $\pm$115.21}
& 0.0209\textsuperscript{\scriptsize $\pm$0.0014}
& 27.75\textsuperscript{\scriptsize $\pm$12.29}
& 114.32\textsuperscript{\scriptsize $\pm$26.80}
& 53.25\textsuperscript{\scriptsize $\pm$19.43}
& 131.90\textsuperscript{\scriptsize $\pm$33.51} \\

DeepState
& 0.3172\textsuperscript{\scriptsize $\pm$0.0028}
& 96.10\textsuperscript{\scriptsize $\pm$0.07}
& 0.0032\textsuperscript{\scriptsize $\pm$0.0001}
& 0.5688\textsuperscript{\scriptsize $\pm$0.1766}
& 0.0181\textsuperscript{\scriptsize $\pm$0.0006}
& 80.25\textsuperscript{\scriptsize $\pm$0.20}
& \textbf{23.60\textsuperscript{\scriptsize $\pm$9.30}}
& 53.48\textsuperscript{\scriptsize $\pm$11.29}
& 144.24\textsuperscript{\scriptsize $\pm$9.66} \\

S4
& 0.3194\textsuperscript{\scriptsize $\pm$0.0001}
& 79.25\textsuperscript{\scriptsize $\pm$0.00}
& 0.2194\textsuperscript{\scriptsize $\pm$0.0180}
& 1.9298\textsuperscript{\scriptsize $\pm$1.8842}
& 0.0192\textsuperscript{\scriptsize $\pm$0.0013}
& 32.16\textsuperscript{\scriptsize $\pm$17.46}
& 124.85\textsuperscript{\scriptsize $\pm$16.39}
& 45.43\textsuperscript{\scriptsize $\pm$15.31}
& 128.35\textsuperscript{\scriptsize $\pm$18.36} \\

\noalign{\vskip 2.5pt}
\hdashline
\noalign{\vskip 2.5pt}

SDPA-Transformer
& 0.3194\textsuperscript{\scriptsize $\pm$0.0000}
& 96.08\textsuperscript{\scriptsize $\pm$0.31}
& 0.0043\textsuperscript{\scriptsize $\pm$0.0010}
& 0.0052\textsuperscript{\scriptsize $\pm$0.0004}
& 0.0205\textsuperscript{\scriptsize $\pm$0.0012}
& 80.80\textsuperscript{\scriptsize $\pm$0.23}
& 121.32\textsuperscript{\scriptsize $\pm$10.59}
& 103.53\textsuperscript{\scriptsize $\pm$19.23}
& 181.31\textsuperscript{\scriptsize $\pm$10.59} \\

LinFormer
& 0.2220\textsuperscript{\scriptsize $\pm$0.0811}
& 95.87\textsuperscript{\scriptsize $\pm$0.14}
& 0.0087\textsuperscript{\scriptsize $\pm$0.0008}
& 0.0087\textsuperscript{\scriptsize $\pm$0.0002}
& 0.0192\textsuperscript{\scriptsize $\pm$0.0017}
& 80.16\textsuperscript{\scriptsize $\pm$0.30}
& 32.92\textsuperscript{\scriptsize $\pm$15.27}
& 38.56\textsuperscript{\scriptsize $\pm$5.13}
& 296.39\textsuperscript{\scriptsize $\pm$55.74} \\

PerFormer
& 0.0311\textsuperscript{\scriptsize $\pm$0.0029}
& 96.22\textsuperscript{\scriptsize $\pm$0.18}
& 0.0027\textsuperscript{\scriptsize $\pm$0.0001}
& 2.6132\textsuperscript{\scriptsize $\pm$1.4312}
& 0.0179\textsuperscript{\scriptsize $\pm$0.0011}
& 62.86\textsuperscript{\scriptsize $\pm$9.06}
& 37.52\textsuperscript{\scriptsize $\pm$9.61}
& 41.52\textsuperscript{\scriptsize $\pm$5.24}
& 154.08\textsuperscript{\scriptsize $\pm$19.02} \\

\noalign{\vskip 2.5pt}
\hdashline
\noalign{\vskip 2.5pt}

mTAN
& 0.3194\textsuperscript{\scriptsize $\pm$0.0001}
& \textbf{96.30\textsuperscript{\scriptsize $\pm$0.25}}
& 0.8368\textsuperscript{\scriptsize $\pm$0.0808}
& 0.0181\textsuperscript{\scriptsize $\pm$0.0000}
& 0.0185\textsuperscript{\scriptsize $\pm$0.0007}
& 80.79\textsuperscript{\scriptsize $\pm$0.24}
& 33.71\textsuperscript{\scriptsize $\pm$10.49}
& 48.85\textsuperscript{\scriptsize $\pm$10.47}
& 141.84\textsuperscript{\scriptsize $\pm$47.55} \\

ODEFormer
& 0.0358\textsuperscript{\scriptsize $\pm$0.0145}
& 96.22\textsuperscript{\scriptsize $\pm$0.20}
& 0.0029\textsuperscript{\scriptsize $\pm$0.0002}
& 0.1210\textsuperscript{\scriptsize $\pm$0.0795}
& 0.0188\textsuperscript{\scriptsize $\pm$0.0015}
& 80.54\textsuperscript{\scriptsize $\pm$0.40}
& 32.35\textsuperscript{\scriptsize $\pm$6.07}
& 40.28\textsuperscript{\scriptsize $\pm$9.98}
& 158.34\textsuperscript{\scriptsize $\pm$24.87} \\

CTA
& 0.2723\textsuperscript{\scriptsize $\pm$0.0048}
& 95.97\textsuperscript{\scriptsize $\pm$0.14}
& 0.0028\textsuperscript{\scriptsize $\pm$0.0003}
& 0.1051\textsuperscript{\scriptsize $\pm$0.0013}
& 0.0201\textsuperscript{\scriptsize $\pm$0.0025}
& 80.43\textsuperscript{\scriptsize $\pm$0.37}
& 32.21\textsuperscript{\scriptsize $\pm$11.06}
& 47.64\textsuperscript{\scriptsize $\pm$14.77}
& 118.61\textsuperscript{\scriptsize $\pm$22.58} \\

PDE-Attention
& 0.3138\textsuperscript{\scriptsize $\pm$0.0001}
& 95.95\textsuperscript{\scriptsize $\pm$0.22}
& 0.0028\textsuperscript{\scriptsize $\pm$0.0002}
& 0.1854\textsuperscript{\scriptsize $\pm$0.0257}
& 0.0197\textsuperscript{\scriptsize $\pm$0.0019}
& 75.95\textsuperscript{\scriptsize $\pm$8.80}
& 136.64\textsuperscript{\scriptsize $\pm$28.63}
& 44.03\textsuperscript{\scriptsize $\pm$11.26}
& 138.95\textsuperscript{\scriptsize $\pm$29.21} \\

ContiFormer
& 0.0053\textsuperscript{\scriptsize $\pm$0.0003}
& 96.15\textsuperscript{\scriptsize $\pm$0.23}
& 0.0026\textsuperscript{\scriptsize $\pm$0.0002}
& 0.0040\textsuperscript{\scriptsize $\pm$0.0002}
& 0.0188\textsuperscript{\scriptsize $\pm$0.0010}
& 50.47\textsuperscript{\scriptsize $\pm$0.50}
& 58.24\textsuperscript{\scriptsize $\pm$8.14}
& 88.57\textsuperscript{\scriptsize $\pm$10.16}
& 182.96\textsuperscript{\scriptsize $\pm$8.91} \\

OT-Transformer
& 0.0070\textsuperscript{\scriptsize $\pm$0.0006}
& 96.23\textsuperscript{\scriptsize $\pm$0.20}
& 0.0032\textsuperscript{\scriptsize $\pm$0.0003}
& 0.9201\textsuperscript{\scriptsize $\pm$0.8453}
& 0.0201\textsuperscript{\scriptsize $\pm$0.0010}
& 80.44\textsuperscript{\scriptsize $\pm$0.37}
& 153.35\textsuperscript{\scriptsize $\pm$13.85}
& \textbf{35.81\textsuperscript{\scriptsize $\pm$7.53}}
& 155.72\textsuperscript{\scriptsize $\pm$14.66} \\

\midrule

FLUID {\color{gray!70}$\mid$RES$\mid$}
& 0.0048\textsuperscript{\scriptsize $\pm$0.0001}
& 96.80\textsuperscript{\scriptsize $\pm$1.46}
& 0.0024\textsuperscript{\scriptsize $\pm$0.0001}
& 0.0023\textsuperscript{\scriptsize $\pm$0.0003}
& 0.0182\textsuperscript{\scriptsize $\pm$0.0010}
& 80.79\textsuperscript{\scriptsize $\pm$0.12}
& 26.92\textsuperscript{\scriptsize $\pm$10.88}
& 49.61\textsuperscript{\scriptsize $\pm$13.41}
& 114.55\textsuperscript{\scriptsize $\pm$16.29} \\

FLUID {\color{gray!70}$\mid$W/O~\(G_{\text{sink}}\)$\mid$}
& 0.0048\textsuperscript{\scriptsize $\pm$0.0002}
& 96.03\textsuperscript{\scriptsize $\pm$0.16}
& \textbf{0.0023\textsuperscript{\scriptsize $\pm$0.0001}}
& 0.0022\textsuperscript{\scriptsize $\pm$0.0002}
& 0.0187\textsuperscript{\scriptsize $\pm$0.0014}
& \textbf{80.83\textsuperscript{\scriptsize $\pm$0.26}}
& 24.19\textsuperscript{\scriptsize $\pm$11.40}
& 34.71\textsuperscript{\scriptsize $\pm$5.07}
& \textbf{112.51\textsuperscript{\scriptsize $\pm$15.53}} \\

FLUID {\color{gray!70}$\mid$\(\mathcal{HC}_{\text{static}}\)$\times$2$\mid$}
& 0.0047\textsuperscript{\scriptsize $\pm$0.0001}
& 96.12\textsuperscript{\scriptsize $\pm$0.25}
& 0.0028\textsuperscript{\scriptsize $\pm$0.0005}
& 0.0022\textsuperscript{\scriptsize $\pm$0.0005}
& \textbf{0.0177\textsuperscript{\scriptsize $\pm$0.0014}}
& \textbf{80.83\textsuperscript{\scriptsize $\pm$0.11}}
& 26.48\textsuperscript{\scriptsize $\pm$10.47}
& 47.41\textsuperscript{\scriptsize $\pm$14.79}
& 118.80\textsuperscript{\scriptsize $\pm$20.45} \\

FLUID {\color{gray!70}$\mid$\(\mathcal{HC}_{\text{static}}\)$\times$4$\mid$}
& 0.0047\textsuperscript{\scriptsize $\pm$0.0002}
& 96.16\textsuperscript{\scriptsize $\pm$0.19}
& 0.0030\textsuperscript{\scriptsize $\pm$0.0008}
& 0.0020\textsuperscript{\scriptsize $\pm$0.0002}
& 0.0183\textsuperscript{\scriptsize $\pm$0.0016}
& 80.73\textsuperscript{\scriptsize $\pm$0.32}
& 28.59\textsuperscript{\scriptsize $\pm$3.61}
& 40.04\textsuperscript{\scriptsize $\pm$8.71}
& \textbf{109.64\textsuperscript{\scriptsize $\pm$10.86}} \\

FLUID {\color{gray!70}$\mid$\(\mathcal{HC}_{\text{static}}\)$\times$8$\mid$}
& 0.0047\textsuperscript{\scriptsize $\pm$0.0002}
& \textbf{96.67\textsuperscript{\scriptsize $\pm$1.66}}
& 0.0025\textsuperscript{\scriptsize $\pm$0.0002}
& 0.0025\textsuperscript{\scriptsize $\pm$0.0002}
& 0.0189\textsuperscript{\scriptsize $\pm$0.0007}
& 80.23\textsuperscript{\scriptsize $\pm$0.42}
& 26.75\textsuperscript{\scriptsize $\pm$11.42}
& 38.51\textsuperscript{\scriptsize $\pm$9.92}
& 112.45\textsuperscript{\scriptsize $\pm$20.01} \\

FLUID {\color{gray!70}$\mid$\(\mathcal{HC}_{\text{liquid}}\)$\times$2$\mid$}
& 0.0047\textsuperscript{\scriptsize $\pm$0.0001}
& 96.08\textsuperscript{\scriptsize $\pm$0.27}
& 0.0024\textsuperscript{\scriptsize $\pm$0.0002}
& 0.0022\textsuperscript{\scriptsize $\pm$0.0003}
& \textbf{0.0175\textsuperscript{\scriptsize $\pm$0.0004}}
& 80.64\textsuperscript{\scriptsize $\pm$0.20}
& 31.00\textsuperscript{\scriptsize $\pm$10.05}
& 40.40\textsuperscript{\scriptsize $\pm$12.20}
& 126.08\textsuperscript{\scriptsize $\pm$20.02} \\

FLUID {\color{gray!70}$\mid$\(\mathcal{HC}_{\text{liquid}}\)$\times$8$\mid$}
& 0.0048\textsuperscript{\scriptsize $\pm$0.0002}
& 96.03\textsuperscript{\scriptsize $\pm$0.16}
& \textbf{0.0023\textsuperscript{\scriptsize $\pm$0.0001}}
& 0.0022\textsuperscript{\scriptsize $\pm$0.0002}
& 0.0187\textsuperscript{\scriptsize $\pm$0.0014}
& \textbf{80.83\textsuperscript{\scriptsize $\pm$0.26}}
& 24.19\textsuperscript{\scriptsize $\pm$11.40}
& \textbf{34.71\textsuperscript{\scriptsize $\pm$5.07}}
& 112.51\textsuperscript{\scriptsize $\pm$15.53} \\

FLUID {\color{gray!70}$\mid$Top-\emph{K}$\times$2$\mid$}

& 0.0047\textsuperscript{\scriptsize $\pm$0.0001}
& 96.13\textsuperscript{\scriptsize $\pm$0.08}
& 0.0024\textsuperscript{\scriptsize $\pm$0.0001}
& \textbf{0.0019\textsuperscript{\scriptsize $\pm$0.0003}}
& 0.0185\textsuperscript{\scriptsize $\pm$0.0007}
& 80.79\textsuperscript{\scriptsize $\pm$0.47}
& 26.62\textsuperscript{\scriptsize $\pm$10.42}
& 47.19\textsuperscript{\scriptsize $\pm$5.50}
& 135.50\textsuperscript{\scriptsize $\pm$26.78} \\

FLUID {\color{gray!70}$\mid$Top-\emph{K}$\times$4$\mid$}
& 0.0049\textsuperscript{\scriptsize $\pm$0.0004}
& 96.15\textsuperscript{\scriptsize $\pm$0.14}
& 0.0024\textsuperscript{\scriptsize $\pm$0.0002}
& 0.0020\textsuperscript{\scriptsize $\pm$0.0002}
& 0.0200\textsuperscript{\scriptsize $\pm$0.0025}
& 80.87\textsuperscript{\scriptsize $\pm$0.19}
& 30.94\textsuperscript{\scriptsize $\pm$7.37}
& 44.96\textsuperscript{\scriptsize $\pm$8.47}
& \textbf{109.39\textsuperscript{\scriptsize $\pm$7.18}} \\

FLUID {\color{gray!70}$\mid$PW$\mid$}
& 0.0047\textsuperscript{\scriptsize $\pm$0.0001}
& 96.72\textsuperscript{\scriptsize $\pm$1.19}
& 0.0023\textsuperscript{\scriptsize $\pm$0.0001}
& 0.0023\textsuperscript{\scriptsize $\pm$0.0004}
& 0.0178\textsuperscript{\scriptsize $\pm$0.0006}
& 80.76\textsuperscript{\scriptsize $\pm$0.34}
& \textbf{23.76\textsuperscript{\scriptsize $\pm$7.49}}
& 41.97\textsuperscript{\scriptsize $\pm$18.44}
& 121.07\textsuperscript{\scriptsize $\pm$21.39} \\

\midrule

FLUID {\color{red!70}$\mid$Main$\mid$}
& \textbf{0.0046\textsuperscript{\scriptsize $\pm$0.0002}}
& \textbf{97.37\textsuperscript{\scriptsize $\pm$1.03}}
& \textbf{0.0022\textsuperscript{\scriptsize $\pm$0.0004}}
& \textbf{0.0021\textsuperscript{\scriptsize $\pm$0.0001}}
& \textbf{0.0174\textsuperscript{\scriptsize $\pm$0.0018}}
& \textbf{80.91\textsuperscript{\scriptsize $\pm$0.23}}
& \textbf{21.29\textsuperscript{\scriptsize $\pm$8.94}}
& \textbf{35.67\textsuperscript{\scriptsize $\pm$9.05}}
& \textbf{111.54\textsuperscript{\scriptsize $\pm$7.43}} \\
\bottomrule
\end{tabular}}
\label{tab:pronostia_metrics}
\begin{minipage}{\textwidth}
\footnotesize
\vspace{2mm}
\textbf{Note:} (↑) higher is better; (↓) lower is better. FLUID {\color{red!70}$\mid$Main$\mid$} uses \(\mathcal{HC}_{\text{liquid}} \times\)4, with Top-\emph{K}$\times$8, and \(G_{\text{sink}}\) enabled.
\end{minipage}
\end{table}

\section{Evaluation}\label{sec:eval}
In this section, we evaluate FLUID in a variety of different learning tasks: (i) irregular time-series modeling; (ii) long-range modeling; (iii) lane-keeping control of autonomous-vehicle; and (iv) learning physical dynamics under scarce data. 

\textbf{Baselines:} We compared FLUID against a range of state-of-the-art CT baselines, including CT-RNNs $\in$ [CT-RNN~\cite{rubanova2019latent}, PhasedLSTM~\cite{neil2016phased}, GRU-ODE~\cite{de2019gru}, mmRNN~\cite{lechner2022mixed}, LTC~\cite{hasani2021liquid}, CfC~\cite{hasani2022closed}, DeepState~\cite{rangapuram2018deep}, S4~\cite{gu2021efficiently}], DT-Transformers $\in$ [SDPA Transformer~\cite{vaswani2023attentionneed}, Linear Attention~\cite{tay2020sparse},Performer~\cite{choromanski2020rethinking}], and CT-Transformers $\in$ [mTAN~\cite{shukla2021multi}, CTA~\cite{chien2021continuous}, ODEFormer~\cite{d2023odeformer}, ContiFormer~\cite{chen2023contiformer}, OT-Transformer\cite{kan2025ot}, PDE-Attention~\cite{zhang2025continuous}].

\textbf{Ablation Configurations: }We also include multiple FLUID ablation configurations.  FLUID {\color{gray!70}$\mid$RES$\mid$} uses standard residual connections. FLUID {\color{gray!70}$\mid$W/O~\(G_{\text{sink}}\)$\mid$} disables the attention-sink gate. FLUID {\color{gray!70}$\mid$\(\mathcal{HC}_{\text{static}}\)$\times$n$\mid$} uses static hyper-connections, while FLUID {\color{gray!70}$\mid$\(\mathcal{HC}_{\text{liquid}}\)$\times$n$\mid$} uses Liquid hyper-connections. FLUID {\color{gray!70}$\mid$Top-\emph{K}$\times$n$\mid$} uses Top-\emph{K} sparsity while FLUID {\color{gray!70}$\mid$PW$\mid$} uses full query-key pairwise concatenation. FLUID {\color{red!70}$\mid$Main$\mid$} uses \(\mathcal{HC}_{\text{liquid}} \times\)4, with Top-\emph{K}$\times$8, and \(G_{\text{sink}}\) enabled.

\textbf{Training and Testing Setup:} All experiments are conducted using 5-fold cross-validation. All competing architectures are trained using BPTT~\cite{lecun1988theoretical} on each fold and evaluated across all folds. We report the mean ($\mu$) and standard ($\sigma$) deviation to quantify predictive uncertainty. To reduce human error and ensure optimal hyperparameters, we use Bayesian optimization~\cite{snoek2012practical} to tune all models for 150 search trials. The optimized hyperparameters are provided in Table~\ref{tab:hparams}.

\begin{table*}[t]
\centering
\caption{Summary of Key Hyperparameters of All Experiments}
\label{tab:hparams}
\resizebox{0.9\textwidth}{!}{%
\begin{tabular}{lcccccc}
\toprule
\textbf{Param.} & \textbf{Spiral} & \textbf{E-MNIST} & \textbf{LRM} & \textbf{CarRacing} & \textbf{Udacity} & \textbf{Physics Learning} \\
\midrule
Conv layers & \textendash & 2×\textbf{1D}(64@5) & -- & 3×TD-\textbf{2D}(10--30@3--5) & 5×\textbf{2D}(24--64@5--3, ELU) & 2×\textbf{1D}(32@3, 16@2) \\
FLUID/baseline & 64-d, 16h & 64-d, 4h & 64-d, 4h  & 64-d, 16h & 100-d, 20h & 16-d, 8h \\
euler\_steps & 10 & 5 & 5 & 5 & 5 & 5 \\
Seq\_len & 1000 & 256 & 96 & 1  & 1 & 1000  \\

Dense & 2(Lin) & 32--10(SM) & -- & 64--5(SM) & 64--1(Lin) & 128--1(Lin) \\
Dropout & \textendash & \textendash & \textendash & 0.2 & 0.5 & \textendash \\
Opt. & AdamW & AdamW & AdamW & Adam & AdamW & AdamW \\
LR & 0.0001 & 0.001 & 0.0001 & 0.001 & 0.001  & 0.0001\\
Loss & MSE & SCE & MSE & SCE & MSE & MSE \\
Metric & MAE & Acc & MSE & Acc & MSE & Score~\cite{nectoux2012pronostia} \\
Batch & 64 & 32 & 64 & 32 & 40 & 32 \\
Epochs & 500 & 30 & 100 & 100 & 15 & 150 \\
\bottomrule
\end{tabular}}
\begin{minipage}{\textwidth}
\footnotesize
\textbf{Note:} SCE = Sparse Categorical Crossentropy; Acc = Accuracy; MAE = Mean Absolute Error; MSE = Mean Squared Error; SM = softmax; Lin = Linear; TD = TimeDistributed; Conv1D/2D = Conv1D/2D; $d$ = model dimension; $h$ = attention heads.
\end{minipage}
\end{table*}

\begin{figure}[ht!]
\centering
\includegraphics[width=1.0\textwidth]{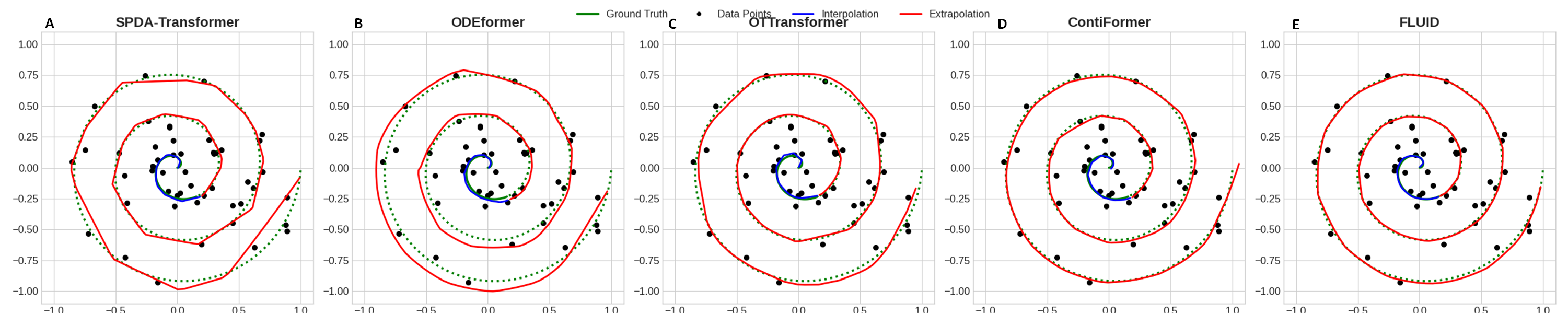}
\caption{Intuitive reconstruction visualization of irregular spiral trajectories: \textbf{(A)} SDPA-Transformer; \textbf{(B)} ODEFormer; \textbf{(C)} OT-Transformer; \textbf{(D)} ContiFormer; \textbf{(E)} FLUID.}
\label{fig:spiral_plots}
\end{figure}

\subsection{Irregular Time-series}
In the first experiment, we evaluate FLUID’s ability to model irregular time-series. We consider two benchmark datasets: (i) Irregular Spiral; and (ii) Event-based MNIST.

\subsubsection{Irregular Spiral} 
The spiral dataset is a commonly used benchmark for intuitively visualizing continuous-time function approximation. We construct the dataset following the procedure in \cite{chen2023contiformer}, generating 300 two-dimensional noisy spirals, each uniformly sampled at 150 time points. To introduce irregularity, we randomly subsample 50 points from each trajectory without replacement. We visualize the reconstruction performance in both interpolation and extrapolation regimes across several Transformer-based models, namely SDPA Transformer, ODEFormer, OT-Transformer, ContiFormer, and FLUID shown in Figure~\ref{fig:spiral_plots}. We observe that most CT-Transformers still rely on discrete attention mechanisms, causing reconstructed trajectories to appear discretized, similar to the SDPA Transformer. In contrast, ContiFormer and FLUID produce smoother spiral trajectories and achieve the highest accuracy in both interpolation and extrapolation settings. 

Table~\ref{tab:all_results} reports the reconstruction MAEs across all the evaluated models. FLUID {\color{red!70}$\mid$Main$\mid$} achieves the lowest error of 0.0046 $\pm$ 0.0002, outperforming the best performing CT-Transformer baseline ContiFormer (0.0053$\pm$0.0003) by 13.2\%. DT-Transformer variants perform substantially worse on this task: the SDPA Transformer records an MAE of 0.3194, whereas LinFormer achieves 0.0222. Among the CT-RNN baselines, the GRU-ODE (0.0048) and LTC-FC (0.0050) perform competitively but remain below the FLUID variants. This finding is consistent with the qualitative analysis showing that FLUID recovers globally coherent spiral geometry in both the interpolation and extrapolation regimes, whereas most baselines produce fragmented or misaligned reconstructions. Within the ablation study, the MAE remains stable across variants (0.0046–0.0048), suggesting that the LAN mechanism is the primary contributor to performance on this benchmark, with \(\mathcal{HC}_{\text{liquid}}\) and $G_{\text{sink}}$ providing incremental improvements.

\subsubsection{Event-based MNIST} 
Introduced by \cite{deng2012mnist}, MNIST is a widely used benchmark dataset in computer vision for handwritten digit recognition. It consists of 70,000 grayscale images of digits from 0 to 9, each with a resolution of 28$\times$28 pixels. The dataset is split into 60,000 training samples and 10,000 testing samples. Following the procedure proposed in \cite{lechner2022mixed}, we convert the grayscale images into irregular time-series representations. This transformation is performed in several steps. First, a threshold is applied to binarize the 8-bit pixel intensities, using 128 as the cutoff between the minimum (0) and maximum (255) intensity values. Next, each 28$\times$28 image is reshaped into a 1-D time-series of length 784. The resulting binary sequence is then encoded into an event-based representation by removing consecutive occurrences of the same value (e.g., $1,1,1,1 \rightarrow 1, t=4$). This encoding introduces a temporal structure and compresses the sequence length from 784 to an average of 53 time steps. Finally, to enable efficient batching and training, each sequence is padded to a fixed length of 256. The time dimension is normalized such that each event corresponds to one unit of time. This process yields a dataset formulated as a per-sequence classification task on irregularly sampled time-series.

Table~\ref{tab:all_results} reports the classification accuracy. FLUID {\color{red!70}$\mid$Main$\mid$} achieves the highest accuracy of 97.37$\pm$1.03, surpassing the second-best baselines mTAN (96.30$\pm$0.25), OT-Transformer (96.23$\pm$0.02), and Performer (96.22$\pm$0.18) by approximately 1.1\%. While this margin is modest in absolute terms, it is meaningful given the saturation of strong performing models in the 96.0–96.3 range. CT-RNN variants perform substantially worse: LTC-NCP (77.77) and CfC-NCP (80.60) exhibit low accuracy and high variance, indicating that wired connectivity structures do not generalize well to compressed event sequences with highly variable interevent intervals. Among ablation configurations, FLUID {\color{red!70}$\mid$Main$\mid$} consistently outperforms FLUID {\color{gray!70}$\mid$RES$\mid$} (96.80) and FLUID {\color{gray!70}$\mid$W/O~\(G_{\text{sink}}\)$\mid$} (96.03). The full model benefits from the combined effect of $G_{\text{sink}}$, which suppresses uninformative nodes, and \(\mathcal{HC}_{\text{liquid}}\), which regulates interlayer signal flow across variable-length event sequences.

\subsection{Long-range Modeling (LRM)}
In the second experiment, we evaluate FLUID’s ability to capture long-range dependencies for accurate forecasting. We consider two real-world benchmark datasets: (i) ETTm1~\cite{informer}; and (ii) Jena Climate~\cite{jena_climate_dataset}.

\subsubsection{ETTm1} 

Ettm1 is a multivariate time- series dataset collected from an electricity transformer in China. It comprises a total of 7 features, including the power load, oil temperature, and other measurements over time. It contains long-range sequences sampled at 15-minute intervals, making it suitable for long-horizon forecasting. We first split the dataset into training (80\%), testing (10\%), and validation (10\%) folds. We condition on the past 12 hours (48 intervals) to predict the subsequent 6 hours (24 intervals). Prior to training, the data are transformed using \texttt{MinmaxScaler}. Post-training, the predictions are then inverse-transformed.

Table~\ref{tab:all_results} reports MSE to assess forecasting accuracy. FLUID {\color{red!70}$\mid$Main$\mid$} achieves the lowest MSE of 0.0022$\pm$0.0004, representing a 15.4\% reduction over the best-performing CT-Transformer baseline ContiFormer (0.0026$\pm$0.0002). The CT-RNN models perform significantly worse, with the GRU-ODE reaching 0.0078 and the CT-RNN reaching 0.0158, indicating that recurrence-based architectures do not scale effectively to long multivariate sequences. Notably, CfC-NCP (1.3941) diverges, highlighting the numerical fragility of closed-form networks under distributional shifts over extended forecasting horizons. As shown in Figure~\ref{fig:forecast}(A), FLUID is the only model whose forecast trajectory faithfully tracks the structure of the Horizon window, whereas all Transformer-based models produce noticeably misaligned projections. Within the ablations, FLUID {\color{gray!70}$\mid$\(\mathcal{HC}_{\text{liquid}}\)$\times$8$\mid$}(0.0023) and FLUID {\color{gray!70}$\mid$RES$\mid$} both improve upon all the baselines, confirming that the LAN attention mechanism drives the core performance gains, while \(\mathcal{HC}_{\text{liquid}}\) provides an additional consistent benefit over both \(\mathcal{HC}_{\text{static}}\) and the standard residual configurations.

\begin{figure}[ht!]
\centering
\includegraphics[width=0.85\textwidth]{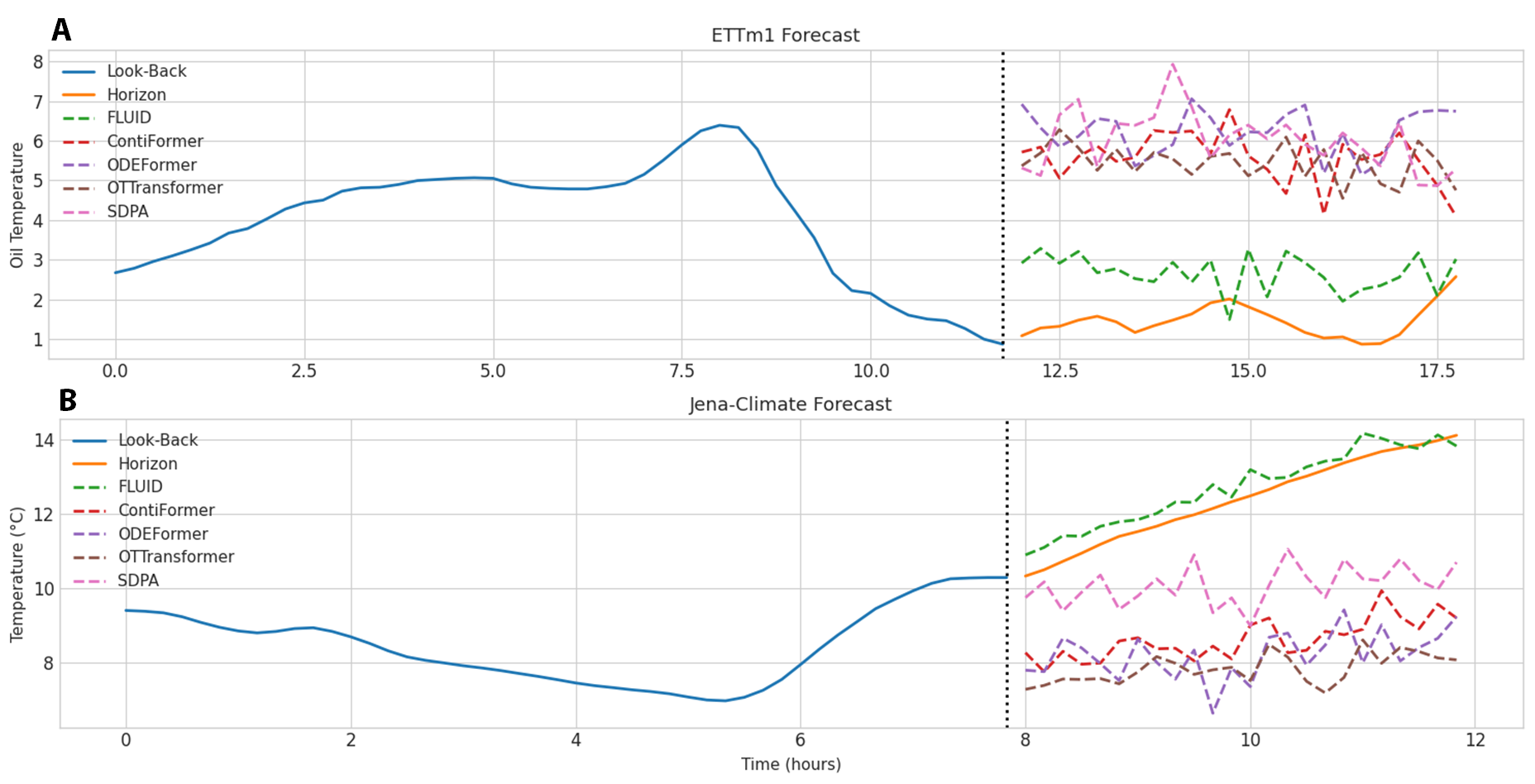}
\caption{Visualization of forecast projections: \textbf{(A)} ETTm1; and \textbf{(B)} Jena-Climate.}
\label{fig:forecast}
\end{figure}

\subsubsection{Jena-Climate} 

The Jena-Climate dataset contains real-world atmospheric recordings collected in Jena, Germany, and sampled at 10-minute intervals. It includes 14 variables, such as temperature, pressure, and humidity. Similar to ETTm1, we split the dataset into training (80\%), testing (10\%), and validation (10\%) folds We use the past 8 hours (48 intervals) of data as input to train FLUID and forecast the subsequent 4 hours (24 intervals). Prior to training, the data are normalized using \texttt{MinmaxScaler}. Post-training, the predictions are then inverse-transformed.

Table~\ref{tab:all_results} reports MSE to assess forecasting accuracy. FLUID {\color{red!70}$\mid$Main$\mid$} achieves the lowest error of 0.0021$\pm$0.0001, representing a 47.5\% improvement over the best-performing CT-Transformr baseline ContiFormer (0.0040$\pm$0.0002). This benchmark is particularly discriminative because several architectures that remain competitive on ETTm1 fail severely here. The ODEFormer reaches 0.12107$\pm$0.0749, the CTA reaches 0.1051$\pm$0.0013, and the PDE-Attention reaches 0.1854$\pm$0.0257, revealing their inability to handle high-dimensional, rapidly evolving atmospheric dynamics. The data in Figure~\ref{fig:forecast}(B) further confirm that FLUID is the only model that remains closely aligned with the Horizon ground-truth temperature curve across the full horizon, whereas all the competing models diverge visibly. Within the ablations, the error remains uniformly stable in the range of 0.0019–0.0025 across all the configurations. This robustness is attributed to the LAN’s forward-invariant logit dynamics (Theorem~\ref{theorem:state_stability}) and the adaptive step-size clamping imposed by Lemma~\ref{lemma:euler_stability}, which together prevent numerical divergence under stiff, rapidly varying inputs.

\subsection{Autonomous Vehicles (AVs) Lane-keeping Control}
In the third experiment, we evaluate FLUID as a control module for autonomous vehicles in lane-keeping tasks. The main goal is to assess its ability to function as a step-wise controller while capturing the structural relationship between the road horizon and corresponding steering commands. Experiments are conducted in two widely used environments: (i) Udacity Simulator~\cite{udacity} and (ii) OpenAI CarRacing~\cite{brockman2016openai}.

\subsubsection{Udacity Simulator} 

In the Udacity simulator, steering control is formulated as a continuous regression problem. A stream of steering angles is predicted end-to-end from visual inputs. To construct the training dataset, we manually drove the vehicle for approximately 50 minutes while recording synchronized data from the front-facing cameras (left, center, and right resulting in a dataset of approximately 15647 images each of size 320$\times$120$\times$3 paired with corresponding steering angles. Prior to training, images were preprocessed and compressed to reduce training computational overhead. The processed inputs were then fed into an end-to-end neural network consisting of convolutional feature extractors followed by the FLUID module. We adopt the image processing and convolutional heads from~\cite{shibuya_car_behavioral_cloning} and replace the dense layers with FLUID/baseline for fair comparison. 

Table~\ref{tab:all_results} reports the MSE for steering regression accuracy. FLUID {\color{red!70}$\mid$Main$\mid$} achieves a competitive MSE of 0.0174 $\pm$ 0.0018, ranking among the top-performing models alongside FLUID {\color{gray!70}$\mid$\(\mathcal{HC}_{\text{liquid}}\)$\times$2$\mid$} (0.0175 $\pm$ 0.0004) and Performer (0.0179 $\pm$ 0.0001). DeepState attains the best overall CT-baseline performance with an MSE of 0.0181 $\pm$ 0.0006, trailing FLUID by approximately 3.9\%. GRU-ODE (0.0197) and CfC-FC (0.0202) achieve moderate performance, while LTC-FC degrades to 0.0252. CT-Transformer baselines, including mTAN (0.0185) and ContiFormer (0.0188), remain competitive but consistently underperformed FLUID variants. Within ablations, smaller $\mathcal{HC}_{\text{liquid}}$ yield stronger performance. In particular, FLUID {\color{gray!70}$\mid$\(\mathcal{HC}_{\text{liquid}}\)$\times$2$\mid$} closely matches FLUID {\color{red!70}$\mid$Main$\mid$}, suggesting that reduced expansion rates provide a regularization effect in visual-motor control settings.

\begin{figure}[ht!]
\centering
\includegraphics[width=0.95\textwidth]{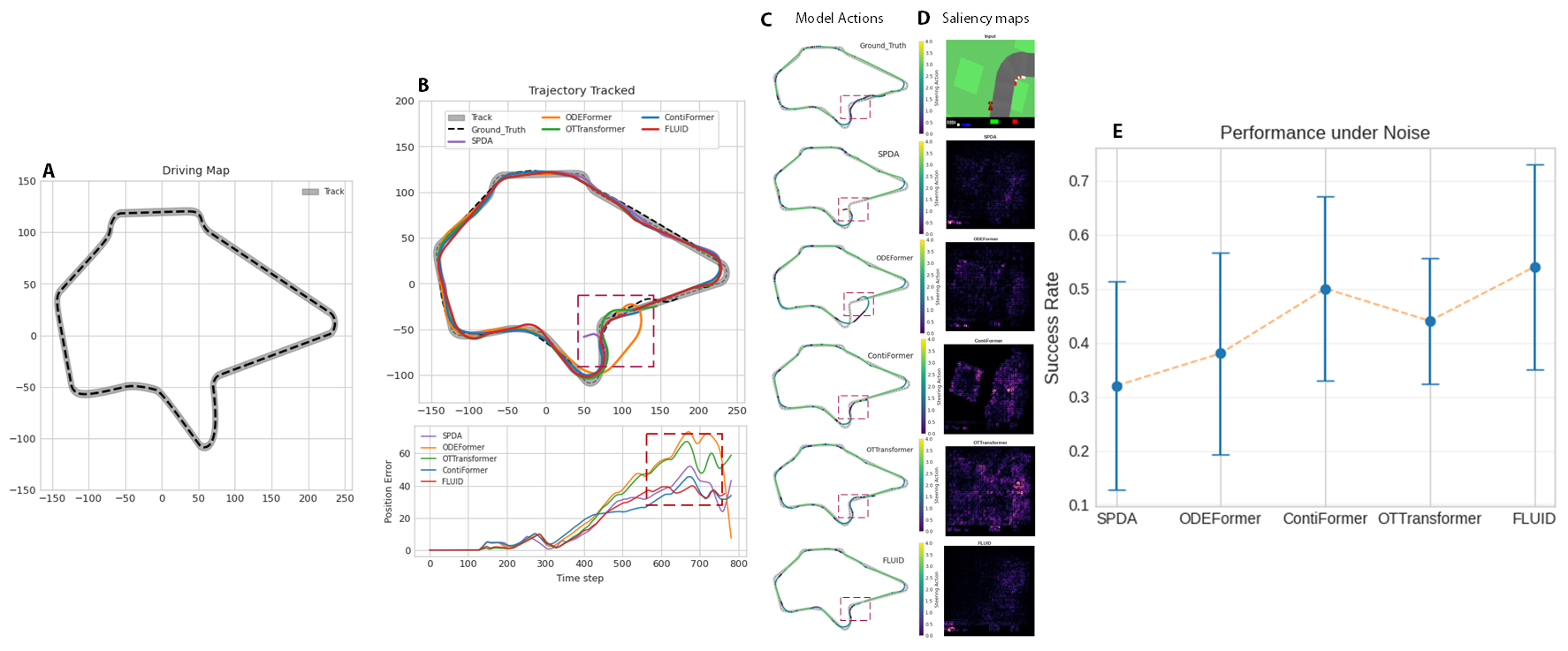}
\caption{Closed-loop analysis of OpenAI-CarRacing: \textbf{(A)} Driving map; \textbf{(B)} Trajectory tracked with position error for each model; \textbf{(C)} Action taken at each step on the map; \textbf{(D)} Saliency maps at highlighted region; \textbf{(E)} Noise test.}
\label{fig:closed-loop}
\end{figure}

\subsubsection{OpenAI CarRacing} 

In OpenAI CarRacing, control is formulated as a discrete decision-making problem, where the agent selects an action from a predefined set, making it a classification task. Training data were collected using a Proximal Policy Optimization (PPO)~\cite{schulman2017proximal} agent trained for 5 M steps. The trained agent was used to generate trajectories over 50 episodes, resulting in a dataset of approximately 48174 image-action pairs. Prior to training, no image processing was applied. We utilized the end-to-end neural network architecture proposed in~\cite{razzaq2023neural} and replaced the latent recurrent component with FLUID/baseline to ensure a fair comparison.

Table~\ref{tab:all_results} reports the prediction accuracy. FLUID {\color{red!70}$\mid$Main$\mid$} achieves the highest accuracy of 80.91$\pm$0.23, exceeding the stronger baselines CT-RNN (80.80$\pm$0.27) and SDPA Transformer (80.80$\pm$0.23). The improvement is moderate but consistent, as the performance is near saturation in this test, with competitive cluster models closely clustered between 80.13 and 80.83.

\paragraph{Closed-loop analysis:}To further validate FLUID behavior beyond classification accuracy, we design a closed-loop evaluation analysis where each model directly controls the vehicle for the full lap shown in Figure~\ref{fig:closed-loop}(A), and its trajectory is recorded from qualitative analysis. We restrict this analysis to Transformer-based models: (i) SDPA Transformer; (ii) ODEFormer; (iii) ContiFormer; and (iv) OT-Transformer to isolate the effect of structural changes on decision-making.

\textit{Trajectory Analysis:} As shown in Figure~\ref{fig:closed-loop}(B), the SDPA Transformer and ODEFormer fail to complete the track, diverging it in the highlighted part. We attribute this behavior to the attention sink, where both models disproportionately concentrate attention mass on early parts of the track where the position error is near zero, causing them to lose temporal context at critical decision points later in the track. OT-Transformer completes the lap but exhibits noticeable instability at the sharp turn in the highlighted region. ContiFormer tracks the ground truth closely and remains competitive with FLUID throughout. However, at the critical sharp turn, it drifts slightly off the road. FLUID maintains the lowest cumulative position error and remains closely aligned with the ground-truth trajectory across the entire lap.

\textit{Model Actions:} To investigate model behavior in the highlighted region, we perform a detailed analysis of the steering actions in Figure~\ref{fig:closed-loop}(C). Notably, the ground-truth trajectory in this region is itself noisy and does not provide a reliable reference, making it a particularly demanding test of a model's internal inductive biases. Despite this, FLUID consistently steers toward the road centerline through the highlighted region, suggesting that its CT dynamics induce a self-correcting inductive bias that is not present in counterparts.

\textit{Saliency Maps:} To probe what information the models rely upon when making steering decisions in this critical region, we visualize saliency maps using GradCam~\cite{selvaraju2017grad} of the convolutional feature heads in Figure~\ref{fig:closed-loop}(D). The results reveal a striking divergence in the attended regions across the models. The salience of the FLUID is concentrated almost exclusively on the road’s horizon across models with minimal activation elsewhere in the image. ContiFormer attends to both the road and other regions, which may explain its light deviation at a sharp turn. ODEFormer largely ignores the road surface entirely and bases its decision predominantly on other scene elements, which is consistent with its failure to complete the lap. OT-Transformer and SDPA Transformer exhibit diffuse, unfocused saliency patterns, suggesting that their attention mechanisms fail to extract task-relevant spatial features in this scenario.

\textit{Noise Test:} To assess robustness, we exploit OpenAI CarRacing’s built-in color-palette randomization, which replaces the appearance of the standard environment while keeping track geometry fixed. Each model is evaluated over 10 runs of 10 episodes, with the mean success rate illustrated in Figure~\ref{fig:closed-loop}(E). FLUID achieves the highest average success rate, followed by ContiFormer and OT-Transformer, while SDPA and ODEFormer collapse to below 40\%. We attribute the noise resilience of FLUID to shared input embeddings and LAN forward invariant dynamics (Theorem~\ref{theorem:state_stability}), which impose an inherent regularization effect against visual perturbations.

\subsection{Learning Physical Dynamics under Scarce Data}
In the fourth experiment, we evaluate FLUID’s ability to model physical dynamics under scarce data conditions using the well-established task of bearing degradation estimation. The task requires learning degradation dynamics and estimating the remaining useful life (RUL) as systems transition from a healthy state to failure.

Bearing degradation estimation is a classical problem in industrial engineering concerned with predicting the RUL of rolling-element bearings (REBs) from observed vibrational signals. Existing approaches are largely data-driven~\cite{razzaq2025carle,zhang2026remaining, wang2026remaining} and often require large amounts of labeled data, which are typically unavailable in real-world industrial scenarios. Moreover, these methods often exhibit limited generalization to unseen operating conditions and lack explicit physical interpretability, which restricts their practical applicability.

\textit{Objective:} The aim is to assess whether FLUID can learn a physically interpretable representation of long-range nonlinear degradation trajectories while maintaining strong generalization. In particular, we investigate its ability to learn from a limited number of samples within a single dataset and generalize to unseen datasets in zero-shot manner, capturing consistent degradation dynamics across different operating conditions and data sources.

\textit{Scarce Training Protocol:} We use three widely adopted bearing degradation datasets: (i) XJTU-SY~\cite{wang2018xjtu}, (ii) HUST~\cite{thuan2023hust}, and (iii) PRONOSTIA~\cite{nectoux2012pronostia}. Together, these datasets comprise 38 run-to-failure bearing trajectories across nine different operating conditions. To ensure a controlled evaluation setting, training is restricted to Bearings 1–3 from the first operating condition of the XJTU-SY dataset, corresponding to approximately 3.81\% of the total available samples. The learned parameters is cross-validated in zero-shot manner to HUST and PRONOSTIA dataset.

\textit{Physics-based Degradation model:} We adopted a unified physics-based degradation model to characterize the nonlinear degradation trajectory. The degradation process is modeled as a superposition of fatigue ($D_F$), wear ($D_W$), and lubrication-induced damage ($D_O$), coupled through a time-varying effective load-carrying capacity.
\begin{equation}
\frac{dD_{\text{coupled}}}{dt} = \frac{dD_F}{dt} + \gamma_w \frac{dD_W}{dt} + \zeta_L \frac{dD_O}{dt}.
\label{eq:total_degradation}
\end{equation}
Here, $D_{\text{coupled}}$ denotes the total degradation index. The parameters $\gamma_w$ and $\zeta_L$ weight the contributions of wear and lubrication to the overall degradation rate. This formulation captures complex nonlinear degradation dynamics and characterizes three stages of bearing life: (i) healthy, (ii) fault progression, and (iii) severe fault. These stages are illustrated in Figure~\ref{fig:pgnn}. The detailed derivation is provided in in Appendix~\ref{appendix:degradation_equation}.

\textit{Evaluation Metric:} Score is an evaluation metric designed for RUL estimation in the IEEE PHM challenge \cite{nectoux2012pronostia}. It is asymmetric, penalizing overestimation more heavily than underestimation. This reflects practical considerations: late maintenance predictions can result in unexpected failures with severe consequences, whereas early interventions are generally less costly.
\begin{equation}
\textit{Score} = \sum_{i : \hat{y}_i < y_i} \left( e^{-\frac{\hat{y}_i - y_i}{13}} - 1 \right) + \sum_{i : \hat{y}_i \geq y_i} \left( e^{\frac{\hat{y}_i - y_i}{10}} - 1 \right)
\label{eq:score}
\end{equation}
The metric aligns with the degradation trajectory of Eqn.~\ref{eq:total_degradation} by emphasizing errors near failure. Lower score indicate more accurate and risk-aware predictions near critical failure.


\subsubsection{Experimental Setup}
The experimental setup consists of three hierarchical steps: (i) Data preprocessing; (ii) Physics-constrained Model; (iii) Generalization tests. Figure~\ref{fig:pgnn} illustrates these steps.

\textbf{Data Preprocessing:} Degradation signals are typically consists of 1-D non-stationary vibration signals that must be preprocessed to extract meaningful patterns. We preprocess the vibration signals using the pipeline proposed in~\cite{razzaq2025carle}. Each signal is segmented into sliding windows of length~$\omega_s$ (1000 in our implementation). For each window, we apply a Morlet wavelet~\cite{lin2000feature} transform and compute wavelet coefficients. These coefficients are summarized into a set of degradation features: (i) energy~$E$; (ii) dominant frequency~$f_d$,; (iii) entropy~$h$; (iv) kurtosis~$K$; (v) skewness~$sk$; (vi) mean~$\mu$; and (vii) sample standard deviation~$\sigma$. In addition, we include the time per sample~$t_{\text{sample}}$ and the corresponding temperature~$T_{\text{sample}}$. The resulting time-frequency representation (TFR) input vector can be written as
\begin{equation}
I_{\text{TFR}} =
\Bigl[
\bigl(E, f_d, h, K, sk, \mu, \sigma\bigr)^{\text{per sensor}},
\; t_{\text{sample}},
\; T_{\text{sample}}
\Bigr]
\end{equation}

\begin{wrapfigure}{r}{0.5\columnwidth}
\vspace{-1mm}
\centering
\includegraphics[width=0.49\columnwidth]{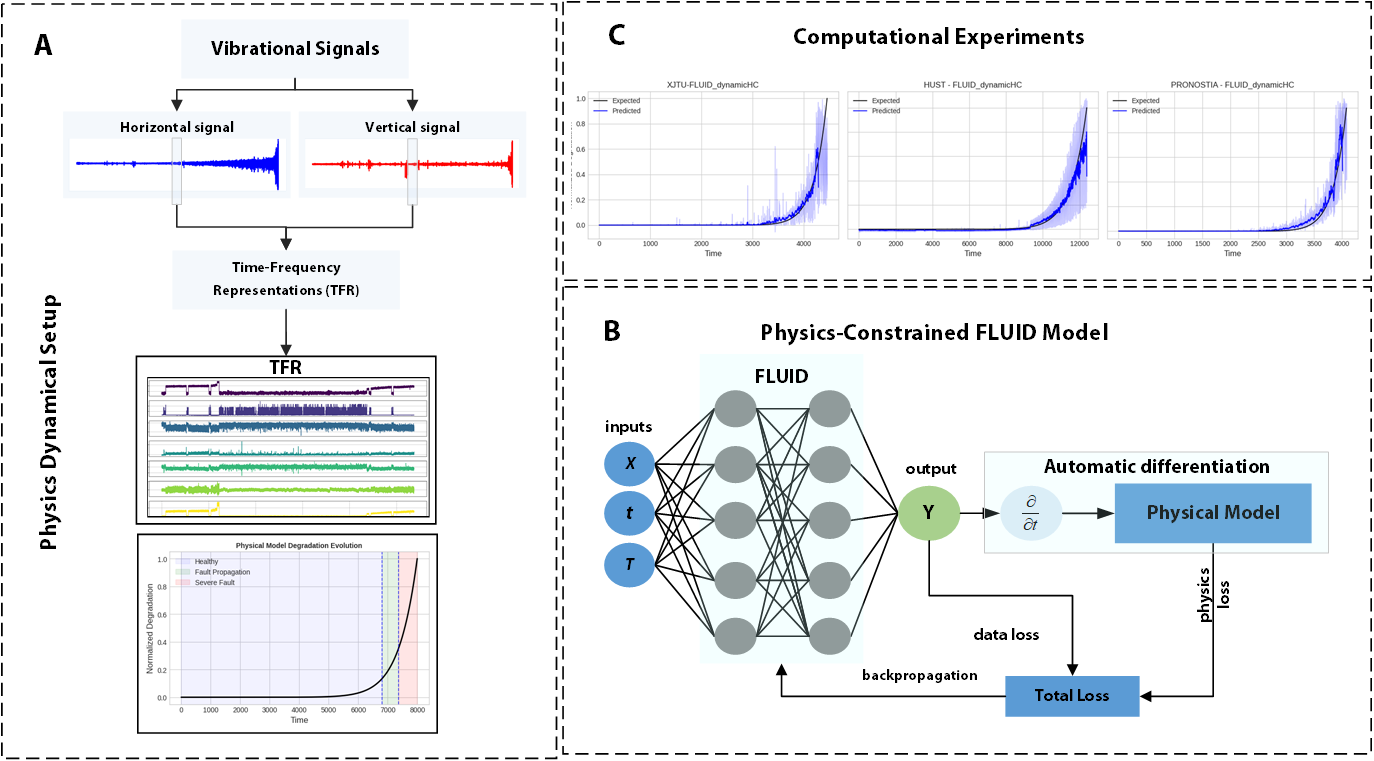}
\caption{Experimental setup for learning physical dynamics. \textbf{(A)} Raw vibration signals are converted to time-frequency representation (TFR); \textbf{(B)} Physics-constrained model is trained of TFR; \textbf{(C)} Perform cross-dataset generalization tests.}
\label{fig:pgnn}
\vspace{-3pt}
\end{wrapfigure}

\textbf{Physics-Constrained Model (PCM):} Data-driven models are often black box that lack physical inductive biases, which can lead to physically inconsistent predictions. To address we incorporate physics-based constraints into the learning objective. The total PC loss is defined as 
\begin{equation}
\mathcal{L}_{\text{total}} = \mathcal{L}_{\text{data}} + \mathcal{L}_{\text{phy}},
\label{eq:pc_loss}
\end{equation}
where $\mathcal{L}_{\text{data}}$ measures the prediction error with respect to ground truth, and $\mathcal{L}_{\text{phy}}$ penalizes violations from Eqn.~\ref{eq:total_degradation}. A custom training step to train this network is implemented based on proposed in ~\cite{razzaq2025developingdistanceawareuncertaintyquantification}.

\textbf{Computational Experiments:} From scarce samples, we extract non-overlapping time windows as individual samples, pool all samples, and randomly shuffle them. The pooled set is then partitioned into 5-folds at the sample level for cross-validation, mitigating temporal leakage. For each fold, separate models including all baselines are trained end-to-end using backpropagation through time (BPTT)~\cite{lecun1988theoretical} with the PC loss (Eqn.~\ref{eq:pc_loss}), resulting in five distinct sets of learned parameters. We report the mean ($\mu$) and standard deviation ($\sigma$) Score across folds. For qualitative interpretability, we focus on Bearing~2 across first operating conditions of all datasets, and the results are reported in Table~\ref{tab:all_results}. We also illustrate the generalization of Transformer-based models in Figure~\ref{fig:degradation_results}.



\begin{figure}[ht!]
\centering
\includegraphics[width=0.95\textwidth]{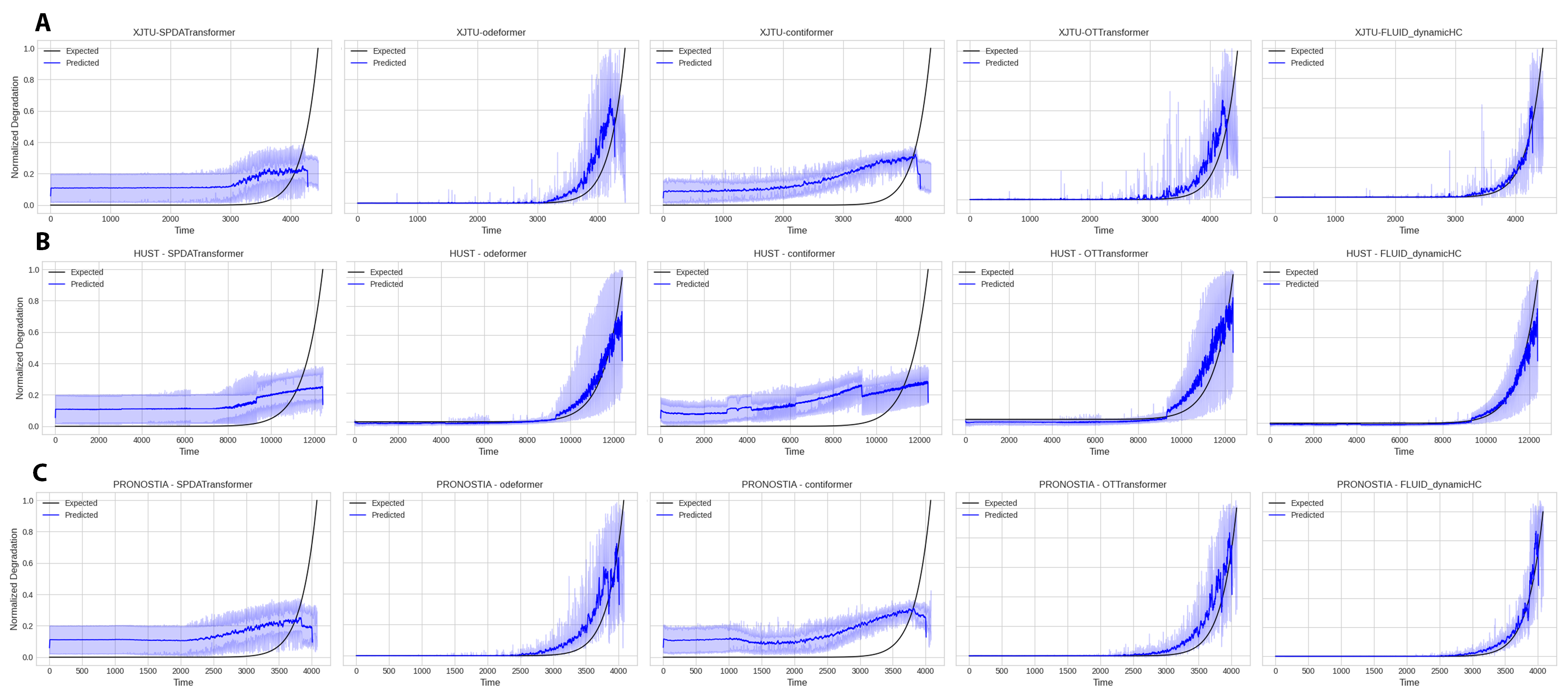}
\caption{Visualization of learned long-range nonlinear degradation trajectories by each model. \textbf{(A)} XJTU-SY dataset; \textbf{(B)} HUST dataset; \textbf{(C)} PRONOSTIA dataset.}
\label{fig:degradation_results}
\end{figure}

\subsubsection{Computational Results}

\textit{XJTU-SY:}  FLUID {\color{red!70}$\mid$Main$\mid$} achieves the lowest score of 21.29$\pm$8.94, outperforming the strongest baseline, PhasedLSTM(23.19$\pm$8.08) by 8.2\% . SDPA records 121.32$\pm$10.59, while ContiFormer (58.24$\pm$8.1) fails to learn meaningful patterns due to the scarce data regime. Removing $G_{\text{sink}}$ increases the score from 21.29 to 24.19, corresponding to a 13.6\% degradation, which directly validates the effect of attention sink. As shown in Figure~\ref{fig:degradation_results}(A), FLUID captures the full three-stage degradation curve, whereas SDPA and ContiFormer either saturate prematurely or diverge near failure.

\textit{HUST:} FLUID {\color{gray!70}$\mid$\(\mathcal{HC}_{\text{liquid}}\)$\times$8$\mid$} achieves the lowest score of 34.71 $\pm$ 5.07, followed by FLUID {\color{red!70}$\mid$Main$\mid$} at 35.67$\pm$9.05, slightly outperforming OT-Transformer (35.81$\pm$7.53).  LAN's forward-invariant dynamics and adaptive step-size clamping prevent the numerical instabilities responsible for erratic zero-shot trajectories. Figure~\ref{fig:degradation_results}(B) shows zero-shot degradation predictions on the HUST dataset and further confirms that FLUID and OT-Transformer predictions closely track the expected degradation envelope, whereas SDPA and ContiFormer exhibit flat or oscillatory trajectories that fail to capture the accelerating wear signal.

\textit{PRONOSTIA:} FLUID {\color{gray!70}$\mid$Top-\emph{K}$\times$4$\mid$} achieves the lowest aggregate score of 109.39 $\pm$ 7.18, followed by FLUID {\color{gray!70}$\mid$\(\mathcal{HC}_{\text{static}}\)$\times$4$\mid$} (109.64 $\pm$ 10.86) and FLUID~|Main| (111.54 $\pm$ 7.43). CTA is the strongest baseline at 118.61 $\pm$ 22.58, giving FLUID a 6.1\% improvement alongside a threefold reduction in variance, indicating more consistent cross-dataset generalization.  The superior performance of Top-\emph{K} sparsity on PRONOSTIA, where FLUID {\color{gray!70}$\mid$\(\mathcal{HC}_{\text{static}}\)$\times$4$\mid$} outperforms FLUID {\color{gray!70}$\mid \text{PW}\mid$} by 9.6\%, suggests that selective key attention is particularly effective when degradation trajectories are sparse and temporally localized. As illustrated in Figure~\ref{fig:degradation_results}(C), FLUID more accurately tracks the steep end-of-life regime, whereas competing baselines either underestimate the late-stage rise or produce high-variance predictions spanning the full degradation range.

\begin{wrapfigure}{r}{0.5\columnwidth}
\vspace{-13mm}
\centering
\includegraphics[width=0.49\columnwidth]{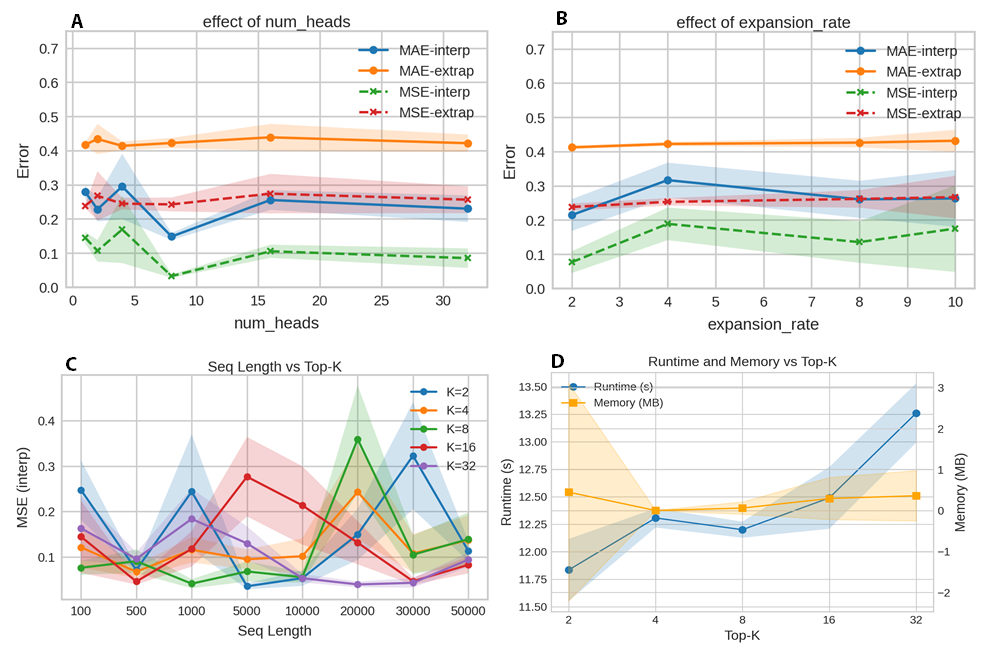}
\caption{Hyperparameter sensitivity analysis. \textbf{(A)} Effect of attention heads; \textbf{(B)} Effect of \(\mathcal{HC}_{\text{liquid}}\) expansion rate; \textbf{(C)} Top-\emph{K} vs. Sequence lengths; \textbf{(D)} Top-\emph{K} vs. run-time and memory requirements}.
\label{fig:key_hyp}
\vspace{-13pt}
\end{wrapfigure}

\subsection{Hyperparameter Sensitivity Analysis}
Sparse Top-\emph{K} attention may discard relevant contextual information from longer sequences, potentially degrading overall accuracy~\cite{tay2022efficient}. To investigate this concern alongside the influence of other key architectural hyperparameters, we conduct hyperparameter sensitivity analysis using a modified version of the irregular spiral. We examine the effects of three key hyperparameters: (i) the number of attention heads (1 to 32); (ii) the \(\mathcal{HC}_{\text{liquid}}\) expansion rate (2 to 10); and (iii) the Top-\emph{K} selection (2 to 32), evaluated across sequence lengths varying from 100 to 50,000. For the first two analyses, we report both the reconstruction MSE and the MAE under interpolation and extrapolation regimes. For the third, we report the reconstruction MSE as a function of sequence length and run-time analysis. Each configuration is evaluated over five independently initialized runs, and the mean and standard deviation are shown in Figure~\ref{fig:key_hyp}.

\textbf{Effect of attention heads:} The effect of increasing attention heads is illustrated in Figure~\ref{fig:key_hyp}(A). The relationship between the number of attention heads and reconstruction error is non-monotonic. Increasing the number of heads from 1 to 2 yields a marginal improvement in error; however, further increases beyond this point tend to degrade performance, suggesting that excessive head partitioning fragments the representational capacity available to each head. A configuration of 8 attention heads achieves the lowest error in both interpolation and extrapolation settings, after which the performance stabilizes as the sensitivity further increases.

\textbf{Effect of Expansion Rate:} The effect of increasing expansion rate is illustrated in Figure~\ref{fig:key_hyp}(B). The expansion rate of the \(\mathcal{HC}_{\text{liquid}}\) also has a non monotonic effect on the reconstruction error. The lowest error is achieved at an expansion rate of 2, with performance gradually deteriorating as the rate increases beyond this value. This suggests that excessive within the hyper-hidden pace introduces redundancy and may impair the model’s ability to regulate interlayer signal flow effectively. These results motivate careful selection of the expansion rate, with smaller values generally preferred.

\textbf{Effect of Top-\emph{K} vs. Sequence Length:} The effect of Top-\emph{K} selection is illustrated in Figure~\ref{fig:key_hyp}(C) with memory and run-time in Figure~\ref{fig:key_hyp}(D). The interaction between Top-\emph{K} selection and sequence length reveals a subtle trade-off between accuracy and computational efficiency. Across most sequence lengths, values in the range 8$\leq$K$\leq$16 provide the most favorable balance, achieving a lower interpolation MSE while maintaining traceable run-time and memory requirements. With respect to shorter sequences, smaller \emph{K} values introduce greater variance, reflecting increased sensitivity to the selection of informative query-key pairs. With longer sequences ($\geq$20,000), \emph{K}=32 achieves the lowest error, suggesting that denser key coverage becomes increasingly beneficial as temporal complexity increases. These findings support the use of \emph{K}=8 as a practical default, with \emph{K}=16 or \emph{K}=32 recommended for tasks involving very long sequences where computational resources permit.

\vfill
\pagebreak

\begin{wrapfigure}{r}{0.50\textwidth}
\vspace{-13mm}
\centering
\captionsetup{type=table}
\caption{Run-Time and Memory Results}
\label{tab:run_time}
\resizebox{0.49\textwidth}{!}{%
\begin{tabular}{lccc}
\toprule
\textbf{Model} & \makecell{\textbf{Run-Time}\\(s)} & \makecell{\textbf{Throughput}\\(seq/s)} & \makecell{\textbf{Peak Memory}\\(MB)} \\
\midrule
CT-RNN
& 7.2746\textsuperscript{\scriptsize ±0.3048} & 0.14 & 0.38 \\
GRU-ODE
& 12.5673\textsuperscript{\scriptsize ±0.1260} & 0.08 & 0.35 \\
PhasedLSTM
& 5.1821\textsuperscript{\scriptsize ±0.2735} & 0.19 & 0.52 \\
LTC-FC
& 14.9679\textsuperscript{\scriptsize ±0.2290} & 0.07 & 0.84 \\
LTC-NCP
& 15.7190\textsuperscript{\scriptsize ±0.4297} & 0.06 & 0.97 \\
CfC-FC
& 6.2448\textsuperscript{\scriptsize ±0.2805} & 0.16 & 1.04 \\
CfC-NCP
& 14.239\textsuperscript{\scriptsize ±0.3327} & 0.05 & 0.57 \\
S4
& 0.0233\textsuperscript{\scriptsize ±0.0036} & 42.94 & 66.83 \\
SDPA-Transformer & 0.0195\textsuperscript{\scriptsize ±0.0036} & 51.40 & 321.36 \\
mTAN
& 0.0292\textsuperscript{\scriptsize ±0.0044} & 34.27 & 789.83 \\
ODEFormer
& 0.0338\textsuperscript{\scriptsize ±0.0009} & 29.60 & 81.95 \\
ContiFormer
& 0.0545\textsuperscript{\scriptsize ±0.0038} & 18.34 & 81.73 \\
CTA
& 8.8892\textsuperscript{\scriptsize ±0.2962} & 0.11 & 0.92 \\
OT-Transformer
& 0.0958\textsuperscript{\scriptsize ±0.0029} & 10.43 & 259.08 \\
PDE-Attention
& 0.0570 \textsuperscript{\scriptsize ±0.0166} & 17.54 & 658.17 \\
\midrule
FLUID {\color{red!70}$\mid$Main$\mid$}
& 0.2086\textsuperscript{\scriptsize ±0.1030} & 4.79 & 124.95  \\
\bottomrule
\end{tabular}%
}
\vspace{-10pt}
\end{wrapfigure}

\subsection{Scalability and Efficiency of FLUID}

We evaluate the efficiency and scalability of FLUID by benchmarking run-time, throughput, and peak memory usage. For this experiment, we fix the model dimension/units to 64, the number of attention heads to 4, the batch size to 1, and the sequence length to 1024 and use Google Colab T4 GPU to conduct this test. For each model, we perform 10 forward passes and report the mean and standard deviation of run-time, throughput and peak memory consumption, in Table~\ref{tab:run_time}. FLUID achieves an average inference time of 0.2086 $\pm$ 0.1030 s and a throughput of 4.79 seq/s, placing it in a competitive mid-range regime. It is substantially faster than CT-RNNs such as GRU-ODE (12.57 s) and LTC-FC (14.97 s), which suffer from inherently sequential computation. Compared with CT Transformer variants, FLUID remains efficient while avoiding the higher computational cost observed in models such as CTA (8.89 s). Although methods such as OT-Transformer (0.0958 s) and PDE-Attention (0.0570 s) achieve lower latency, they do so at the expense of significantly higher memory usage. In terms of memory, FLUID requires 124.95 MB, which is considerably lower than those of the SDPA-Transformer (321.36 MB), mTAN (789.83 MB), OT-Transformer (259.08 MB), and PDE-Attention (658.17 MB). These findings demonstrate that compared with both the standard and CT-Transformer baselines, FLUID provides strong memory efficiency.

\section{Discussion}\label{sec:discuss}
While FLUID demonstrates promising gains, several aspects can be further extended:

\textbf{Stochastic LAN:} Replacing the LAN-ODE with  stochastic differential equation (SDE) is a natural direction for future work, as it will introduce intrinsic uncertainty quantification into the attention mechanism.

\textbf{Hyper-Connections:} Unconstrained hyper-connections in residual mappings may disrupt identity propagation, leading to signal amplification or attenuation that destabilizes large-scale training. Exploring manifold-constrained hyper-connections (mHC)~\cite{xie2025mhc} offers a potential direction for regulating signal flow and improving training stability. 

\textbf{Learnable sparse Top-\emph{K} selection:} Developing an adaptive or learnable Top-\emph{K} selection strategy may improve key scoring, and incorporating hardware-aware optimization could further enhance accuracy and robustness in future work.

\section{Conclusion}\label{sec:conclude}
We present FLUID, a novel continuous-time Transformer architecture that embeds continuous dynamics directly into attention computation via the Liquid Attention Network (LAN). LAN rethinks attention logits as the solution to a first-order linear ODE modulated by input-dependent nonlinear interlinked recurrent gates. Theoretically, we establish stability guarantees for LAN dynamics and prove that the LAN serves as an interpolating middle ground between SDPA and CT-RNNs recovering each as a special case under a well-defined parameterization of its gating functions. The LAN is augmented by an explicit attention-sink gate that eliminates disproportionate mass on uninformative nodes. To resolve the seesaw effect in standard residual connections, FLUID replaces them with Liquid Hyper-Connection that dynamically regulates interlayer information flow. We evaluated FLUID across a broad set of learning problems: (i) irregular time-series modeling; (ii) long-range forecasting; (iii) lane-keeping control of autonomous vehicles; and (iv) physical dynamics learning under scarce data regimes. FLUID consistently matches or outperforms competing baselines, achieving up to 47\% improvement while demonstrating markedly superior generalization under distributional shift and low-data regimes. In autonomous driving tasks, FLUID exhibits strong noise resilience and a self-correcting inductive bias, maintaining the lowest cumulative position error and robust closed-loop performance even under visual randomization. Ablation studies confirm that both LAN and Liquid Hyper-Connections deliver complementary gains in stability, expressivity, and robustness. Finally, we analyze the impact of key hyperparameters and discuss scalability and efficiency. FLUID occupies a practical middle ground between CT-RNNs and Transformers in both runtime and memory usage, offering a favorable trade-off between modeling flexibility and computational cost.

\subsection{Broader Impact} 
This work rethinks SDPA by modeling attention logits as CT dynamics governed by linear ODEs with nonlinear, interlinked gates. Reformulating attention as an evolving process can enable more interpretable explanations and can pave the way for biologically plausible attention mechanisms i.e., Neuronal Attention Circuit (NAC)~\cite{razzaq2025neuronal} by authors. FLUID transformer offers a general alternative to standard Transformers for accurate multivariate time-series modeling.


\section*{Data \& Code Availability}
The data and code for reproducibility are available at \url{https://github.com/itxwaleedrazzaq/fluid-transformer}.




\bibliographystyle{unsrt}
\bibliography{references}

\appendix

\section{Preliminaries}
In this section, we will provide a comprehensive background.
\subsection{SDPA Transformer}\label{appendix:sdpa}
The SDPA Transformer is a sequence modeling neural network architecture based on an SDPA mechanism, enabling parallel computation to model long-range dependencies effectively. It consists of three components: (i) input embeddings, (ii) encoder, and (iii) decoder.

\textbf{Input Embeddings: } Each input token is mapped to a continuous vector, and positional information is injected to retain order. Given token embeddings $E \in \mathbb{R^{(n\times d)}}$, positional encoding $P \in \mathbb{R^{(n\times d)}}$ are added to obtain the input representation: \(X = E + P\). A common choice is sinusoidal positional encoding:
\begin{align}
\mathrm{PE}(pos, 2i) &= \sin\left(\frac{pos}{10000^{2i/d}}\right), \quad
\mathrm{PE}(pos, 2i+1) = \cos\left(\frac{pos}{10000^{2i/d}}\right)
\end{align}

\textbf{Encoder:} The encoder consists of a stack of identical blocks, each block contains a multi-head self-SDPA module followed by a position-wise feed-forward network (FFN) with residual connection and layer normalization applied after each sub-layer. Self-SDPA allows each position to attend to all tokens in the sequence. Given input $X$, a encoder block computes:
\begin{align}
H &= \mathrm{LayerNorm}(X + \mathrm{MultiHead}(X, X, X)) \\
\mathrm{Enc}(X) &= \mathrm{LayerNorm}(H + \mathrm{FFN}(H))
\end{align}

\textbf{Decoder:} The decoder also consists of a stack of identical blocks, each consisting of three submodules: (i) masked self-SDPA, cross-SDPA, and an FFN. Masked Self-SDPA restricts each position to attend only to previous positions, ensuring autoregressive generation, while cross-SDPA conditions on the encoder representations. For input $Y$ and encoder output $Z$, a decoder block computes:
\begin{align}
H_1 &= \mathrm{LayerNorm}(Y + \mathrm{MultiHead}(Y, Y, Y)) \\
H_2 &= \mathrm{LayerNorm}(H_1 + \mathrm{MultiHead}(H_1, Z, Z)) \\
\mathrm{Dec}(Y, Z) &= \mathrm{LayerNorm}(H_2 + \mathrm{FFN}(H_2))
\end{align}

\subsubsection{SDPA Mechanism}\label{appendix:prelim_attention}
SDPA mechanisms have become a cornerstone in modern neural architectures, enabling models to dynamically focus on relevant parts of the input. The concept was first introduced in the context of neural machine translation, where it allowed the decoder to weight encoder outputs according to their importance for generating each target token. Formally, given a query vector \(q \in \mathbb{R}^d\), key vectors \(K = [k_1, k_2, \dots, k_n] \in \mathbb{R}^{n \times d}\), and value vectors \(V = [v_1, v_2, \dots, v_n] \in \mathbb{R}^{n \times d}\), the attention mechanism can be expressed in two steps:

\begin{enumerate}
    \item Compute the scaled-dot-attention logits:
    \begin{equation}
    a_i = \frac{q^\top k_i}{\sqrt{d}}
    \label{eq:attention_logit}
    \end{equation}
    
    \item Normalize the logits to get attention weights and compute the output:
    \begin{equation}
    \alpha_i = \text{softmax}(a_i) =  \frac{e^{a_i}}{\sum_{j=1}^n e^{a_j}}, \quad \text{SDPA}(Q, K,V) = \sum_{i=1}^n \alpha_i v_i
    \end{equation}
\end{enumerate}

Here, \(a_i\) is the raw attention logit between the query and each key, and the scaling factor \(\sqrt{d}\) prevents large dot products from destabilizing the softmax \cite{vaswani2017attention}.

\subsection{Liquid Neural Networks (LNNs)}\label{appendix:prelim_lnn}

Liquid Neural Networks (LNNs) are a class of CT-RNNs that represent a dynamical system with variable time constants associated with their hidden states. Unlike standard CT-RNNs, which define the system’s dynamics through implicit nonlinearities, LNNs use a first-order linear dynamical system coupled with nonlinear, interlinked gates. The dynamics of the hidden state $\mathbf{x_t}$ are described as:
\begin{equation}
\frac{d\mathbf{x_t}}{dt} = -\frac{\mathbf{x_t}}{\tau} + \mathbf{S_t},
\label{eq:raw_lnn}
\end{equation}
where $\mathbf{S_t}$ represents the nonlinear term, defined as \(\mathbf{S_t} = f(\mathbf{x_t}, \mathbf{u}, t, \theta) (A - \mathbf{x_t}),\) with $A$ being a learnable parameter matrix. Plugging $\mathbf{S_t}$ into the Eqn.~\ref{eq:raw_lnn} gives:
\begin{equation}
\frac{d\mathbf{x_t}}{dt} =
\underbrace{\left[\frac{1}{\tau} + f(\mathbf{x_t}, \mathbf{u}, t, \theta)\right]}_{{\omega_\tau}}
\mathbf{x_t}
+ \underbrace{f(\mathbf{x_t}, \mathbf{u}, t, \theta)A}_\phi
\label{eq:lnn}
\end{equation}
The neural network $f$ not only determines the derivative of the hidden state but also serves as an input-dependent, variable time constant, defined as:
\begin{equation}
\tau_{sys} = \frac{\tau}{1 + \tau \, f(\mathbf{x_t}, \mathbf{u}, t, \theta)}.
\end{equation}
This formulation allows LNNs to dynamically adjust the effective time constant of each hidden unit based on the current state and input, providing greater adaptivity and robustness compared to traditional CT-RNNs. LNNs are known for their strong expressivity, stability, and performance in irregularly sampled time-series modeling \cite{hasani2021liquid, hasani2022closed}.

\section{Physics-based Degradation model}~\label{appendix:degradation_equation}

A detailed rationale for the derivation is provided in~\cite{razzaq2025developing}; we briefly outline it here.

\textbf{Effective Load rating: }The effective load-carrying capacity (\(C_{\mathrm{eff}}\)) evolves by scaling the nominal rating ($C_{\text{Load}}$) according to viscosity loss ($\nu/\nu_0$), geometric degradation ($W_{\text{mod}}$), and contamination effects from debris concentration ($C_{\text{debris}}$) as
\begin{equation}
C_{\mathrm{eff}} = C_{load}\cdot \frac{\nu}{\nu_0}\cdot W_{mod}\cdot
\exp\!\bigl(-\psi C_{debris}\bigr),
\end{equation}
\textbf{Fatigue:} The Fatigue life degradation (\(D_F\)) is modeled as:
\begin{equation}
\frac{dD_F}{dt} = \left( \frac{P}{C_{\text{eff}}} \right)^{p} \frac{n}{60 \times 10^{6}} + \beta \phi \left( \frac{P}{C_{\text{eff}}} \right)^{q} n
\end{equation}
where $D_F$ evolves as a load-normalized, cycle-based term driven by radial load $P$ and speed $n$, augmented by an equivalent damaged volume (EDV) scaled by $\beta$ and $\phi$.

\textbf{Wear:} The wear degradation (\(D_W\)) evolves according to
\begin{equation}
\frac{dD_W}{dt} = \frac{A_v P}{H_{hard}}\frac{ds}{dt} + A_a R.
\end{equation}
where $D_w$ is modeled as the sum of load-driven sliding Archrad wear~\cite{kauzlarich2001archard}, which is proportional to hardness $H_{\text{hard}}$, and an abrasive component amplified by surface roughness $R$.

\textbf{Lubrication: }Lubrication and thermal degradation (\(D_O\)) are modeled by
\begin{equation}
\frac{dD_O}{dt} =\frac{1}{m c_p} \Bigl[ \mu_f P \omega - hA(T-T_a) + \xi\frac{dO}{dt} \Bigr].
\end{equation}
This equation captures $D_O$ by balancing frictional heat generation from load $P$ and angular speed $\omega$, heat dissipation to the environment, and heat released by lubricant oxidation $O$.

\textbf{Geometric: }Geometric degradation (\(W_{mod}\)) of the contact is represented by
\begin{equation}
W_{mod}
=
\bigl(1+\eta V+\zeta R^2\bigr)^{-1},
\end{equation}
where $W_{\text{mod}}$ is the geometric modifier that reduces load capacity as the accumulated wear volume $V$ and surface roughness $R$ increase.

\textbf{Stochastic contamination: }Stochastic contamination dynamics are modeled as
\begin{equation}
dC_{debris} = \rho\frac{dD_W}{dt}dt +\sigma_c d\mathcal{W}_t,
\end{equation}
The stochastic equation models debris concentration $C_{\text{debris}}$ as wear-generated particle production with random fluctuations driven by a Wiener process $\mathcal{W}_t$.


\end{document}